\newcommand{\inclu}[0] {\ar@{^{(}->}}
\newcommand{\bbmatrix}     {\begin{bmatrix}}
\newcommand{\ebmatrix}     {\end{bmatrix}}
\title{Novel and Efficient Approximations for Zero-One Loss of Linear Classifiers}
\author{Hiva Ghanbari
        \thanks{Department of Industrial and Systems Engineering, Lehigh University, Bethlehem, PA, USA.  E-mail: {\tt hiva.ghanbari@gmail.com}}
        \and
        Minhan Li
        \thanks{Department of Industrial and Systems Engineering, Lehigh University, Bethlehem, PA, USA.  E-mail: {\tt mil417@lehigh.edu}}
        \and
        Katya Scheinberg
        \thanks{Department of Industrial and Systems Engineering, Lehigh University, Bethlehem, PA, USA.  E-mail: {\tt katyascheinberg@gmail.com}}
        }
\date{}
\begin{document}

\maketitle


\begin{abstract} 
\noindent The predictive quality of machine learning models is typically measured in terms of their (approximate) expected prediction accuracy or the  so-called Area Under the Curve (AUC). Minimizing the reciprocals of these measures--the expected risk and ranking loss--are  the goals of supervised learning. However, when the models are constructed by the means of  empirical risk minimization (ERM), surrogate functions such as the logistic loss or hinge loss are optimized instead. This is done  because empirical approximations  of the expected  error and the ranking loss are step functions that have zero derivatives almost everywhere. In this work, we show that in the case of linear predictors,  the expected error and the expected ranking loss can be effectively approximated by smooth functions whose closed form expressions and those of their first (and second) order derivatives depend on the first and second moments of the data distribution, which can be precomputed. 
Hence,  the complexity of an optimization algorithm applied to these functions does not depend on
the size of the training data. These approximation functions are derived under the assumption that the output of the linear classifier for a given data set has an approximately normal distribution. We argue that this assumption is significantly weaker than the Gaussian assumption on the data itself and we support this claim by demonstrating that our new approximation is quite accurate on data sets that are not necessarily Gaussian. We present computational results that show that our proposed approximations and related optimization algorithms can produce linear classifiers with similar or better test accuracy or AUC,  than those obtained using state-of-the-art approaches, in a fraction of the time. 

\end{abstract} 

\section{Introduction} \label{sec.introduction}
In this paper, we propose a  novel and efficient approach to empirical risk minimization for binary linear classification. Our main motivation is that the true goal of a machine learning algorithm is  to minimize the expected prediction error of a classifier,  or, in the case of imbalanced data sets, the expected ranking loss  \cite{hanley}.
 However, since the standard sample average approximation of the expected prediction error is a discontinuous step function, whose gradient is either zero or not defined, other continuous and convex loss functions,  such as hinge loss and logistic loss, whose sample average approximations have useful derivatives, are optimized instead.
Here we argue that, unlike their sample average approximations,  the expected error and ranking loss functions of a linear classifier  are themselves often smooth and thus can be efficiently optimized via gradient-based optimization methods, if only  accurate gradient estimates can be obtained. We thus derive novel direct approximation functions of  the expected error and ranking loss, which can be written in closed form, using first and second moments of the data distributions. We then 
apply further approximation by replacing the exact moments with empirical moments and thus obtain functions that can be efficiently optimized by using first or second order methods. The resulting functions are not convex, but our empirical results show that good solutions are obtainable by standard optimization methods.

Our approximation of the expected error and ranking loss functions are derived under the assumption that, for any useful linear classifier $w$,  its  output  $w^\top x$, over a given data set, is approximately normal with appropriate first and second order moments, 
which are  functions of $w$ and the first and second moments of the data distribution. We will argue that this assumption is a lot milder than the assumption that the data itself is nearly Gaussian. Then under the assumption that $w^\top x$ {\em exactly}  obeys normal distribution, with given moments, we derive the closed form of the expected error and ranking loss functions. We then use these functions and their derivatives to approximately optimize the  empirical error and ranking loss for a given training data set. The only dependence of these functions on the training data is via the first and second moments which can be computed prior to applying optimization. Thus the complexity of each iteration of the optimization algorithm is independent of the data set size. This is in contrast with other standard ERM methods, such as logistic regression and support vector machines. 
This distinction is of particular importance in optimizing the  ranking  loss because replacing it with surrogate loss functions such as  pairwise logistic loss or pairwise hinge loss whose gradient computation has superlinear dependence on the data set size.

Through empirical experiments we show that  optimizing the derived functional forms of prediction error and ranking loss, using empirical approximate moments, produces competitive (and sometimes better) predictors  with  those obtained by optimizing surrogate approximations, such as logistic and hinge losses. This behavior is in contrast with, for example, Linear Discriminant Analysis (LDA) \cite{LDA},  which is the method to compute linear classifiers under the Gaussian assumption.

 In the next section we introduce some definitions and preliminaries. In Section \ref{sec.ER.AUC}  we derive our proposed approximations and  provide some theoretical justifications. We present computational results and plots demonstrating the quality of the approximations in Section \ref{sec.numerical}, and finally, we state our conclusions in Section \ref{sec.conclusion}.


\section{Preliminaries and Problem Description}\label{sec.prelim}
We consider the classical setting of  \emph{supervised} machine learning, where we are given a finite \emph{training set} $\mathcal{S}$ of $n$ pairs,
\begin{equation*} \label{training.set}
\mathcal{S} := \{(x_i,y_i)~:~ i =1,\cdots,n\},
\end{equation*}
where $x_i \in \mathbb{R}^d$ are the \emph{input}  vectors of \emph{features} and $y_i \in\{+1,-1\}$ are the \emph{binary output} labels. It is assumed that each pair $(x_i,y_i)$ is an i.i.d.  sample of the random variable $(X,Y)$ with some unknown joint probability distribution.
Let  $f(x;w) $ denote a \emph{classifier function}, parametrized by $w$. 
As discussed above, there are two different performance measures by which one may evaluate the quality of  $f$: the expected prediction accuracy,  and the expected AUC. A reciprocal of the expected accuracy is the expected error, which 
 is defined as follows
\begin{equation} \label{expectedRisk}
\begin{aligned}
F_{01}\left( w \right) =  P({Y}\cdot  f(X;w) <0)= \mathbb{E}_{{X},{Y}} \left[  \mathbbm{1} \left({Y}  \cdot f(X;w) \right)\right ] 
\end{aligned}
\end{equation}
where
\begin{equation} \label{step_0,1}
 \mathbbm{1}\left(v\right) =  \begin{cases} 1 & \text {if} ~~ v <0,\\ 0 & \text {if} ~~ v \geq 0. \end{cases}
\end{equation}

A  sample average approximation of \eqref{expectedRisk}, on $\mathcal{S}$,  is 
\begin{equation} \label{func_0,1}
\hat F_{01}\left(w;\mathcal{S}\right) = \frac{1}{n}\sum_{i=1}^{n}  \mathbbm{1}\left (y_i \cdot f(x_i;w)\right ).
\end{equation}

The difficulty of optimizing  \eqref{func_0,1}, even approximately, arises from the fact that its gradient is either not defined or is equal to zero. Thus, gradient-based optimization methods cannot be applied. The most common alternative is to  utilize the \emph{logistic regression loss function}, as an approximation of the prediction error and solve the following  unconstrained convex optimization problem
\begin{equation} \label{min.logreg}
\begin{aligned}
\min_{w \in \mathbb{R}^d}  \hat F_{log}(w) = \frac{1}{n}\sum_{i=1}^{n} \log \left(1+ e^{-y_i \cdot f(x_i;w)}\right ) + \lambda r(w) ,
\end{aligned}
\end{equation}
where $ \lambda r(w)$ is the regularization term, with  $r(\cdot)=\|\cdot\|_1$ or $r(\cdot)=\|\cdot\|_2$ as possible examples. 
While the  logistic loss function and its derivatives  have closed form solution, the effort of computing  $\hat F_{log}(w)$ and its gradients has  a linear dependency on $n$.  One can utilize numerous stochastic gradient schemes to reduce the per-iteration complexity of optimizing  $\hat F_{log}(w)$, however, the approach we propose here achieves  better result faster and with a method that requires almost no tuning.

We now discuss the AUC function as an alternative to the prediction accuracy as the quality measure of a classifier.
Let $\mathcal{X}^+$ and $\mathcal{X}^-$ denote the space of the positive and negative input vectors, respectively. 
Let $\mathcal{S}^+ := \{x: (x,y)\in \mathcal{S},~y=+1\}=\{x_1^+, \ldots, x_{n^+}^+\}$ and $\mathcal{S}^- := \{x: (x,y)\in \mathcal{S},~y=-1\}=\{x_1^-, \ldots, x_{n^-}^-\}$ be the sample sets of positive and negative examples, 
 so that $x_i^{+}$ is an i.i.d. observation of the random variable $X^+$ from  $\mathcal{X}^+$ and $x_j^{-}$ is an i.i.d. observation of the random variable $X^-$ from $\mathcal{X}^-$.  
 Then
 the expected AUC function of a  classifier $f(x;w)$ is defined as 
\begin{equation} \label{def.AUC}
\begin{aligned}
F_{AUC}(w) &= \mathbb{E}_{{X}^+,{X}^-} \left [1-\mathbbm{1}\left (f\left(X^+;w\right)- f\left(X^-;w\right)\right ) \right ]  \\
& = P\left (f\left(X^+;w\right)- f\left(X^-;w\right) \geq 0 \right ). 
\end{aligned}
\end{equation}
The reciprocal of AUC is the ranking loss, defined as
\begin{equation} \label{def.RL}
F_{rank}(w) = P\left (f\left(X^+;w\right)- f\left(X^-;w\right) < 0 \right ). 
\end{equation}

The empirical ranking loss is the sample average approximation of \eqref{def.RL} which is defined as
\begin{equation} \label{def.ERL}
\hat F_{rank}(w) =\frac{\sum_{i=1}^{n^+} \sum_{j=1}^{n^-}  \mathbbm{1}  \left (f( x_i^+;x) - f( x_j^-;w) \right )}{n^+  \cdot n^-},
\end{equation}
where $n^+=|\mathcal{S}^+|$, $n^-=|\mathcal{S}^-|$ and  $\mathbbm{1} (\cdot)$ is defined as in \eqref{step_0,1}.

Similarly to the empirical risk minimization, the gradient of this  function is either zero or not defined, thus, gradient-based optimization methods cannot be applied directly. 
 Various techniques have been proposed to approximate the ranking loss with a surrogate function. 
In \cite{mozer}, the indicator function $ \mathbbm{1}[\cdot]$ in \eqref{def.ERL} is substituted with a  \emph{sigmoid surrogate function}, e.g., $1/ \left({1+e^{-\beta \left(f(x^+;w) - f(x^-;w)\right)}} \right)$ and a gradient descent algorithm is applied to this smooth approximation. The choice of the parameter $\beta$  in the  sigmoid function definition significantly affects the output of this approach; although a large value of $\beta$ renders a closer approximation of  the step function, it also results in large oscillations  of the gradients, which in turn can cause numerical issues in the gradient descent algorithm. Similarly, as is discussed in \cite{rudin}, \emph{pairwise exponential loss} and \emph{pairwise logistic loss} can be utilized as convex smooth surrogate functions of the indicator function $ \mathbbm{1}[\cdot]$. However, due to the required pairwise comparison of the value of $f(\cdot;w)$, for each positive and negative pair, the complexity of computing function value as well as the gradient is of the order of $\mathcal{O}\left(n^+n^- \right)$, which can be very expensive. In \cite{steck}, the following \emph{pairwise hinge loss} has been used as a surrogate function, 
\begin{equation} \label{AUC_hinge}
\begin{aligned}
&\hat {F}_{hinge}\left(w\right) \\ 
&= \frac{\sum_{i=1}^{n^+} \sum_{j=1}^{n^-} \max \left \{0, 1- \left( f(x^-_j;w) - f(x^+_i;w)  \right)\right \} }{n^+  \cdot n^-}.
\end{aligned}
\end{equation}
The advantage of pairwise hinge loss over other alternative approximations, such as pairwise logistic loss, lies in the fact that the function values as well as the subgradients of pairwise hinge loss can be computed in roughly $\mathcal{O}\left( n \log(n) \right)$ time, where $n = n^+ + n^-$, by first sorting all values $f(x^-_j;w)$ and $f(x^+_i;w)$. 
Stochastic gradient schemes for  surrogate ranking loss objectives are not as well developed as those ERM \cite{Fan_AUC}. 

In this paper, we propose to optimize alternative smooth approximations of expected error and expected ranking loss, which display good accuracy and also have low computational cost. Towards that end, in the next section, we show that,  under some assumptions on the data,   the
 expected  error and expected ranking loss  of a linear classifier are both smooth functions with closed form expressions. 

\section{New Smooth Approximations of Expected  Error and Ranking Loss for Linear Classifiers}\label{sec.ER.AUC} 
Let us first consider  the expected error expression \eqref{expectedRisk} specifically for the case when $f(x;w)=w^\top x$.
We have the following simple lemma. \\

\begin{lemma} \label{ER_prob}
Given the \textit{prior probabilities} $P(Y=+1)$ and $P(Y=-1)$ we can write 
\begin{equation*}
\begin{aligned}
F_{01}(w) =~& P({Y} \cdot w^\top {X} <0) \\
=~& P\left( w^\top {X}^+ \leq 0\right) P\left({Y} = +1\right) \\
&+ \left(1-P\left(w^\top {X}^- \leq 0\right)\right) P\left({Y} = -1\right),
\end{aligned}
\end{equation*}
where $X^+$ and $X^-$ are random variables from positive and negative classes, respectively. 
\end{lemma}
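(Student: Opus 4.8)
The plan is to obtain the claimed decomposition by conditioning on the class label $Y$, which takes only the two values $+1$ and $-1$. Starting from the definition in \eqref{expectedRisk}, the law of total probability gives
\begin{equation*}
F_{01}(w) = P\!\left(Y \cdot w^\top X < 0 \mid Y = +1\right) P(Y = +1) + P\!\left(Y \cdot w^\top X < 0 \mid Y = -1\right) P(Y = -1),
\end{equation*}
so the whole argument reduces to simplifying the two conditional probabilities.

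Next I would substitute the value of $Y$ on each conditioning event. On $\{Y = +1\}$ the product $Y \cdot w^\top X$ equals $w^\top X$ and $X$ is distributed as $X^+$, so the first conditional probability is $P(w^\top X^+ < 0)$. On $\{Y = -1\}$ the product equals $-w^\top X$, so the event $\{Y \cdot w^\top X < 0\}$ is the same as $\{w^\top X > 0\}$ with $X$ distributed as $X^-$, giving $P(w^\top X^- > 0)$. Rewriting this last term through the complement rule as $1 - P(w^\top X^- \leq 0)$ yields exactly the expression in the statement, with the positive-class term supplying the false-negative contribution and the negative-class term the false-positive contribution.

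The only point requiring a word of care is the boundary value $w^\top X = 0$: the statement uses $P(w^\top X^+ \leq 0)$ while the indicator in \eqref{step_0,1} produces the strict inequality $P(w^\top X^+ < 0)$. I would reconcile these by invoking the standing assumption of this section that $w^\top X$ has a continuous (indeed approximately normal) distribution, under which $P(w^\top X = 0) = 0$ and the two expressions coincide. I do not anticipate any genuine obstacle here: the result is a straightforward bookkeeping application of conditioning, and its entire content is the clean separation of the expected error into its two class-conditional pieces weighted by the prior probabilities.
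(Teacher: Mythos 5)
Your proof is correct and follows essentially the same route as the paper's: both condition on the label $Y$ via the law of total probability, substitute the value of $Y$ inside each conditional event, and apply the complement rule to the negative-class term. If anything, you are slightly more careful than the paper, which silently replaces $P\left(w^\top X^+ < 0\right)$ by $P\left(w^\top X^+ \leq 0\right)$ without noting the continuity assumption $P\left(w^\top X^+ = 0\right) = 0$ that you make explicit.
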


\begin{proof} 
Note that we can split the whole set $\left \{(X,Y)~:~Y \cdot w^\top X <0 \right \} \subset \mathcal{X} \times \mathcal{Y}$ into two disjoint sets as the following:
\begin{equation*}
\left \{(X,Y)~:~Y \cdot w^\top X <0 \right \} = \left \{(X^+,+1)~:~w^\top X^+ <0 \right \} \cup \left \{(X^-,-1)~:~ w^\top X^- \geq 0 \right \}.
\end{equation*}
Then from the definition of $F_{01}$ we have 
\begin{equation*}
\begin{aligned}
F_{01}(w) =~& P\left({Y} \cdot w^\top {X}<0\right) \\
=~& P\left({Y} \cdot w^\top {X}<0 \cap {Y} = +1 \right) + P\left({Y} \cdot w^\top {X}<0 \cap {Y} = -1 \right) \\
=~& P\left({Y} \cdot w^\top {X}<0 \mid {Y} = +1 \right) P\left({Y} = +1\right) + P\left({Y} \cdot w^\top {X}<0 \mid {Y} = -1 \right) P\left({Y} = -1\right) \\
=~& P\left( w^\top {X}^+<0\right) P\left({Y} = +1\right) + P\left(w^\top {X}^->0\right) P\left({Y} = -1\right) \\
=~& P\left( w^\top {X}^+\leq 0\right) P\left({Y} = +1\right) + \left(1-P\left(w^\top {X}^- \leq 0\right)\right) P\left({Y} = -1\right).
\end{aligned}
\end{equation*} 
\end{proof} 

Based on this result   $F_{01}(w)$ is a continuous and smooth function everywhere, except $w=0$, if   the Cumulative Distribution Function (CDF)  of the  random variable $w^\top {X}$ is a continuous smooth function.  In general, it is possible to derive smoothness of the CDF of $w^\top {X}$ for a variety of distributions, which will imply that, in principal, continuous optimization techniques can be applied to optimize $F_{01}(w)$. However, to use gradient-based methods it is necessary to obtain an estimate of the gradient 
of $F_{01}(w)$. In this paper, we will show that under the assumption that $w^\top {X}$ obeys normal distributions for both  positive and negative classes,  $F_{01}(w)$ and its derivatives  have  closed form expressions. First, however, we will motivate this assumption and derive the moments of the distribution of  $w^\top {X}$. 

Since the  multivariate normal distribution is closed under linear transformation  \cite{tong} we have the following result. \\

\begin{lemma} Let ${X} \sim \mathcal{N}\left(\mu, \Sigma\right)$ be a random Gaussian vector and $w$ be a fixed vector.  Then the random variable $Z=w^\top X$  obeys normal distribution as follows. 
\begin{equation}
{Z} \sim \mathcal{N}\left(w^T \mu, w^T \Sigma w \right)
\end{equation}
\end{lemma}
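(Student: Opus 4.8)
The plan is to reduce the claim to two routine facts: that $Z = w^\top X$ is normally distributed, and that its mean and variance equal $w^\top \mu$ and $w^\top \Sigma w$, respectively. Since the statement immediately follows the remark that the multivariate normal family is closed under linear transformations (via \cite{tong}), I would first invoke that property: the map $x \mapsto w^\top x$ is a linear functional $\mathbb{R}^d \to \mathbb{R}$, so $Z$ is a one-dimensional (affine) image of a Gaussian vector and is therefore itself Gaussian. It then only remains to identify its two parameters.

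For the mean I would apply linearity of expectation directly, using that $w$ is a fixed deterministic vector: $\mathbb{E}[Z] = \mathbb{E}[w^\top X] = w^\top \mathbb{E}[X] = w^\top \mu$. For the variance I would expand
\[
\mathrm{Var}(Z) = \mathbb{E}\big[(w^\top X - w^\top \mu)^2\big] = \mathbb{E}\big[w^\top (X-\mu)(X-\mu)^\top w\big] = w^\top\, \mathbb{E}\big[(X-\mu)(X-\mu)^\top\big]\, w = w^\top \Sigma w,
\]
pulling the deterministic $w$ outside the expectation and using the definition $\Sigma = \mathbb{E}[(X-\mu)(X-\mu)^\top]$. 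Combined with the normality already established, this yields $Z \sim \mathcal{N}(w^\top \mu, w^\top \Sigma w)$.

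Honestly, there is no genuine obstacle here: the result is classical, and given the cited closure property the argument is essentially a two-line moment computation. If one preferred a fully self-contained proof that establishes normality and both parameters simultaneously, I would instead use the moment generating function: since $M_X(t) = \exp\!\big(t^\top \mu + \tfrac{1}{2} t^\top \Sigma t\big)$, evaluating at $t = sw$ gives $M_Z(s) = \mathbb{E}[e^{s w^\top X}] = \exp\!\big(s\, w^\top \mu + \tfrac{1}{2} s^2\, w^\top \Sigma w\big)$, which is exactly the MGF of $\mathcal{N}(w^\top \mu, w^\top \Sigma w)$, and uniqueness of the MGF closes the argument. The only point worth flagging is the degenerate case: since $\Sigma \succeq 0$ we always have $w^\top \Sigma w \geq 0$, and when $w^\top \Sigma w = 0$ (for instance $w = 0$, or $w$ in the null space of $\Sigma$) the distribution of $Z$ collapses to a point mass at $w^\top \mu$, so the conclusion should be read as permitting this degenerate Gaussian.
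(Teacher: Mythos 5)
Your proposal is correct, and it follows the same route as the paper, which states this lemma without proof as an immediate consequence of the closure of the multivariate normal family under linear transformations cited from \cite{tong}; your write-up simply fills in the routine mean and variance identification (and an optional MGF argument) that the paper leaves to the citation. Nothing to fix.
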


Let us now consider the case when $X\in R^d$ is not a Gaussian vector, but some random vector with mean $\mu$ and covariance $\Sigma$. Then $Z=w^\top X=\sum_{i=1}^d w_iX_i$ is a sum of $d$ random variables $w_iX_i$ $i=1, \ldots d$.
Clearly $Z$ is a random variable with mean $w^T \mu$ and  variance $w^T \Sigma w $. While in general the distribution of $Z$ is unknown, we argue that for many data sets and for $w$ that we encounter in the training process this distribution is {\em close to normal}, due to Central Limit Theorem \cite{patrick} and its many variants for dependent variables (see e.g. \cite{fisher}). Indeed,  for successful learning  it is better to have features that are nearly independent, and while features of a data vector $X$ are not typically individually independent, they often can be grouped into blocks of variables, so that the blocks are (nearly) independent, or these features otherwise obey some structured dependence that can be exploited by the CLT. 
This observation motivates our derivation of the closed form of $F_{01}(w)$.

 Let $X^+$ and $X^-$ be the random vectors from the positive and negative class, respectively,
  with their means and covariances denoted by  $\mu^+$, $\mu^-$ and  $\Sigma^{+}$ and $\Sigma ^{-}$, respectively.  
 Let $P({Y}=+1)$ and $P({Y}=-1)$ denote the probabilities that a random data vector $X$ belongs to the positive or negative class respectively. \\

\begin{theorem} \label{theorem.main.0_1}
Suppose that for a given $w\neq 0$, $w^\top X^+$  and  $w^\top X^-$ obey normal distributions as follows
\begin{equation} \label{original.normal}
\begin{aligned}
&Z^+=w^\top {X}^+ \sim \mathcal{N}\left(\mu_{Z^+}, \sigma^2_{Z^+}\right)~~~ \text{and}\\
&Z^-=w^\top {X}^- \sim \mathcal{N}\left(\mu_{Z^-}, \sigma^2_{Z^-}\right),
\end{aligned}
\end{equation}
where $\mu_{Z^+} = w^T \mu^+$, $\sigma_{Z^+} = \sqrt{w^T \Sigma^+ w}$, $\mu_{Z^-} = w^T \mu^-$, and $\sigma_{Z^-} = \sqrt{w^T \Sigma^- w}$. 

Then, $F_{01}$ defined in \eqref{expectedRisk}  for $f(x;w)=w^\top x$ equals  $F_{n01}(w)$ which is defined as follows
\begin{equation} \label{smooth.zero.one}
\begin{aligned}
F_{n01}(w) =~& P({Y}=+1)  \left (1-\phi \left (\mu_{Z^+}/\sigma_{Z^+}\right ) \right ) \\
&+ P({Y}=-1)  \phi \left (\mu_{Z^-}/\sigma_{Z^-}\right ),
\end{aligned}
\end{equation}
where $\phi$ is the CDF of the standard normal distribution, that is $\phi(v) = \int_{-\infty}^v\frac{1}{\sqrt{2 \pi}}\exp({-\frac{1}{2} t^2}) dt$, for $\forall v \in \mathbb{R}$.
\end{theorem}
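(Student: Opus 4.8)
The plan is to reduce the statement directly to the decomposition already established in Lemma~\ref{ER_prob}, after which the normality assumption turns the two class-conditional probabilities into evaluations of the standard normal CDF. First I would invoke Lemma~\ref{ER_prob} to write
\[
F_{01}(w) = P\left(w^\top X^+ \leq 0\right) P(Y=+1) + \left(1 - P\left(w^\top X^- \leq 0\right)\right) P(Y=-1),
\]
so that the only remaining task is to evaluate $P(w^\top X^+ \leq 0)$ and $P(w^\top X^- \leq 0)$ under the hypothesis \eqref{original.normal}.

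The main computational step is to standardize each normal variable. For the positive class, since $Z^+ = w^\top X^+ \sim \mathcal{N}(\mu_{Z^+}, \sigma^2_{Z^+})$ with $\sigma_{Z^+} > 0$ (which holds for $w \neq 0$ when $\Sigma^+$ is positive definite), I would subtract the mean and divide by the standard deviation to obtain
\[
P\left(w^\top X^+ \leq 0\right) = P\left(\frac{Z^+ - \mu_{Z^+}}{\sigma_{Z^+}} \leq -\frac{\mu_{Z^+}}{\sigma_{Z^+}}\right) = \phi\left(-\frac{\mu_{Z^+}}{\sigma_{Z^+}}\right),
\]
and identically $P(w^\top X^- \leq 0) = \phi(-\mu_{Z^-}/\sigma_{Z^-})$.

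The key identity that delivers the stated closed form is the symmetry of the standard normal CDF, namely $\phi(-v) = 1 - \phi(v)$ for all $v \in \mathbb{R}$. Applying it gives $P(w^\top X^+ \leq 0) = 1 - \phi(\mu_{Z^+}/\sigma_{Z^+})$ and $1 - P(w^\top X^- \leq 0) = \phi(\mu_{Z^-}/\sigma_{Z^-})$. Substituting these two expressions into the decomposition from Lemma~\ref{ER_prob} yields exactly $F_{n01}(w)$ as defined in \eqref{smooth.zero.one}, which completes the argument.

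I do not anticipate a genuine obstacle here: essentially all of the content is packed into Lemma~\ref{ER_prob} and the normality hypothesis, so the remainder is a routine standardization together with the CDF symmetry identity. The only points requiring a small amount of care are the sign bookkeeping---distinguishing the event $\{w^\top X^+ \leq 0\}$ (misclassification of a positive example) from $\{w^\top X^- \leq 0\}$ (correct classification of a negative example)---and ensuring $\sigma_{Z^+}, \sigma_{Z^-} > 0$ so that the standardization is well defined and the normal CDF is applied to a genuine (non-degenerate) Gaussian.
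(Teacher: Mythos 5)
Your proposal is correct and takes essentially the same route as the paper: the paper's own (very terse) proof just invokes the class-conditional decomposition of Lemma~\ref{ER_prob} and immediately writes down the standardized CDF values, which are exactly the steps you carry out. If anything, your write-up is more complete, since you make explicit the standardization $P(Z^{\pm}\leq 0)=\phi(-\mu_{Z^{\pm}}/\sigma_{Z^{\pm}})$, the symmetry identity $\phi(-v)=1-\phi(v)$, and the nondegeneracy requirement $\sigma_{Z^{\pm}}>0$, all of which the paper leaves implicit.
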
 

\begin{proof}  Let us define  random variables $Z^+$ and $Z^-$ as is stated in the theorem. Then we have 
\begin{equation*}
\begin{aligned}
F_{n01}(w) &= P\left ({Y} \cdot w^\top {X}<0\right ) \\
&= \left (1- \phi\left({ \mu_{Z^+}}/{\sigma_{Z^+}}\right)\right ) P\left ({Y} = +1\right ) + \phi\left({\mu_{Z^-}}/{\sigma_{Z^-}}\right) P\left ({Y} = -1\right ).
\end{aligned}
\end{equation*}
where $\mu_{Z^+} = w^\top \mu^+$, $\sigma_{Z^+} = \sqrt{w^\top \Sigma^+ w}$, $\mu_{Z^-} = w^\top \mu^-$, and $\sigma_{Z^-} = \sqrt{w^\top \Sigma^- w}$. 
\end{proof}


In Theorem \ref{col.error.short} we  derive the closed form solution of the gradient of $F_{01}(w)$. First we provide expression for the derivative of the cumulative function $\phi\left (g(w)\right )$, where $g(w) = {w^\top {\mu}}/{\sqrt{w^\top {\Sigma} w}}$. \\

\begin{lemma} \label{lemma.derivative.phi}
The first derivative of the cumulative function $\phi\left (g(w)\right )$  with $g(w) = \frac{w^\top {\mu}}{\sqrt{w^\top {\Sigma} w}}$ is
\begin{equation*}
\begin{aligned}
\frac{d}{dw} \phi(g(w)) & = \frac{1}{\sqrt{2 \pi}} \exp\left ({-\frac{1}{2} \left ({\frac{w^\top {\mu}}{\sqrt{w^\top {\Sigma} w}}}\right )^2}\right ) \\
&  \left ( \frac{\sqrt{w^\top {\Sigma} w} \cdot {\mu} - {\frac{w^\top {\mu}}{\sqrt{w^\top {\Sigma} w}}} \cdot {\Sigma} w}{w^\top {\Sigma} w} \right ).
\end{aligned}
\end{equation*}
\end{lemma}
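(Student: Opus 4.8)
The plan is to treat this as a routine chain-rule computation, breaking $\phi(g(w))$ into an outer function (the standard normal CDF) and an inner scalar-valued function $g$ of the vector $w$. First I would invoke the Fundamental Theorem of Calculus on the defining integral of $\phi$ to record that $\phi'(v) = \tfrac{1}{\sqrt{2\pi}} e^{-v^2/2}$, so that the chain rule gives
\begin{equation*}
\frac{d}{dw}\,\phi(g(w)) = \phi'(g(w))\,\nabla_w g(w) = \frac{1}{\sqrt{2\pi}}\exp\!\left(-\tfrac{1}{2}\,g(w)^2\right)\nabla_w g(w).
\end{equation*}
Substituting $g(w) = w^\top\mu/\sqrt{w^\top\Sigma w}$ into the exponential already reproduces the first factor claimed in the statement, so the entire task reduces to computing the gradient $\nabla_w g(w)$.

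Next I would compute $\nabla_w g(w)$ by the quotient rule, writing $g = a/b$ with numerator $a(w) = w^\top\mu$ and denominator $b(w) = \sqrt{w^\top\Sigma w}$. The numerator gradient is immediate, $\nabla_w a = \mu$. For the denominator I would set $h(w) = w^\top\Sigma w$ so that $b = \sqrt{h}$, use $\nabla_w h = 2\Sigma w$ (here I rely on the symmetry of $\Sigma$, which holds since it is a covariance matrix), and then apply the chain rule for the square root to obtain
\begin{equation*}
\nabla_w b = \frac{1}{2\sqrt{h}}\,\nabla_w h = \frac{\Sigma w}{\sqrt{w^\top\Sigma w}}.
\end{equation*}
The quotient rule $\nabla_w g = (b\,\nabla_w a - a\,\nabla_w b)/b^2$ then yields exactly
\begin{equation*}
\nabla_w g(w) = \frac{\sqrt{w^\top\Sigma w}\cdot\mu - \dfrac{w^\top\mu}{\sqrt{w^\top\Sigma w}}\cdot\Sigma w}{w^\top\Sigma w},
\end{equation*}
which is the second factor in the claimed expression.

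Multiplying the two factors together completes the proof. There is essentially no hard step here: the only points requiring care are the correct use of symmetry of $\Sigma$ when differentiating the quadratic form $w^\top\Sigma w$, and keeping track of where the denominator $\sqrt{w^\top\Sigma w}$ appears so that the square-root derivative is placed correctly. I would also note in passing that every expression is well defined precisely when $w \neq 0$ and $\Sigma$ is positive definite, so that $w^\top\Sigma w > 0$; this is consistent with the smoothness caveat at $w=0$ observed earlier in the excerpt.
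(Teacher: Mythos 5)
Your proof is correct and follows essentially the same route as the paper's: apply the chain rule, obtain $\phi'(v) = \tfrac{1}{\sqrt{2\pi}}e^{-v^2/2}$ from the Fundamental Theorem of Calculus, and combine with the gradient of $g$. The only difference is that the paper simply asserts the expression for $g'(w)$, whereas you derive it explicitly via the quotient rule (correctly using the symmetry of $\Sigma$ to get $\nabla_w (w^\top \Sigma w) = 2\Sigma w$), which makes your write-up slightly more complete than the original.
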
 

\begin{proof} 
Note that based on the chain rule we have 
\begin{equation}\label{chain.rule}
\frac{d}{dw} \phi \left (g(w)\right ) = \phi'(g(w)) g'(w).
\end{equation}
By substituting 
\begin{equation*}
\phi'(v) = \frac{d}{dv} \int_{-\infty}^{v} \frac{1}{\sqrt{2 \pi}}\exp\left ({-\frac{1}{2} t^2}\right ) dt =  \frac{1}{\sqrt{2 \pi}}\exp\left ({-\frac{1}{2} v^2}\right )~~~\text{and }
\end{equation*}
\begin{equation*}
g'(w) =  \frac{\sqrt{w^\top \hat{\Sigma} w} \cdot \hat{\mu} - {\frac{w^\top \hat{\mu}}{\sqrt{w^\top \hat{\Sigma} w}}} \cdot \hat{\Sigma} w}{w^\top \hat{\Sigma} w},
\end{equation*}
in \eqref{chain.rule} we conclude the result.
\end{proof} 
%
%


\begin{theorem} \label{col.error.short}
Under conditions of Theorem \ref{theorem.main.0_1} the gradient of the  $F_{01}(w)$ for $w\neq 0$ is equal to the gradient of $F_{n01}(w)$ which is
\begin{equation}\label{eq:col.error.short}
\begin{aligned}
\nabla_w F_{n01}(w) &= P\left (Y = -1\right ) \frac{1}{\sqrt{2 \pi}} \exp\left (-\frac{1}{2} \left (\frac{\mu_{Z^-}}{\sigma_{Z^-}}\right )^2\right ) \\
 & \cdot \left ( \frac{\sigma_{Z^-}{\mu}^- -  \frac{\mu_{Z^-}}{\sigma_{Z^-}}  \cdot \Sigma^- w}{\sigma_{Z^-}^2} \right ) \\
&- P\left (Y = +1\right ) \frac{1}{\sqrt{2 \pi}} \exp\left (-\frac{1}{2} \left (\frac{\mu_{Z^+}}{\sigma_{Z^+}}\right )^2\right ) \\
&\cdot  \left ( \frac{\sigma_{Z^+} {\mu}^+ -  \frac{\mu_{Z^+}}{\sigma_{Z^+}} \cdot  \Sigma^+ w}{\sigma_{Z^+}^2} \right ),
\end{aligned}
\end{equation}
where $\mu_{Z^-}$, $\sigma_{Z^-}$, $\mu_{Z^+}$, and $\sigma_{Z^+}$ are defined in Theorem \ref{theorem.main.0_1}.
\end{theorem}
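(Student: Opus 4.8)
The plan is to obtain $\nabla_w F_{n01}(w)$ by differentiating the closed form \eqref{smooth.zero.one} term by term, and then to invoke Theorem \ref{theorem.main.0_1}, which guarantees $F_{01}(w) = F_{n01}(w)$ under the stated normality assumption, so that $\nabla_w F_{01}(w) = \nabla_w F_{n01}(w)$ wherever the latter exists. Since $P(Y=+1)$ and $P(Y=-1)$ are constants independent of $w$, and since the additive constant $P(Y=+1)$ arising from the $1$ in $1 - \phi(\mu_{Z^+}/\sigma_{Z^+})$ has zero gradient, the gradient of $F_{n01}$ reduces to a linear combination of the gradients of the two cumulative terms $\phi(\mu_{Z^+}/\sigma_{Z^+})$ and $\phi(\mu_{Z^-}/\sigma_{Z^-})$.

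The key step is to recognize that each of these two terms has exactly the form $\phi(g(w))$ treated in Lemma \ref{lemma.derivative.phi}, with the generic moments $(\mu,\Sigma)$ specialized to $(\mu^+,\Sigma^+)$ in the positive case and to $(\mu^-,\Sigma^-)$ in the negative case; indeed $\mu_{Z^+}/\sigma_{Z^+} = w^\top\mu^+/\sqrt{w^\top\Sigma^+ w} = g(w)$ under this substitution, and similarly for the negative class. Applying Lemma \ref{lemma.derivative.phi} directly then yields
\begin{equation*}
\nabla_w \phi\!\left(\mu_{Z^\pm}/\sigma_{Z^\pm}\right) = \frac{1}{\sqrt{2\pi}} \exp\!\left(-\tfrac{1}{2}\left(\tfrac{\mu_{Z^\pm}}{\sigma_{Z^\pm}}\right)^2\right) \left(\frac{\sigma_{Z^\pm}\mu^\pm - \tfrac{\mu_{Z^\pm}}{\sigma_{Z^\pm}}\,\Sigma^\pm w}{\sigma_{Z^\pm}^2}\right),
\end{equation*}
after rewriting $\sqrt{w^\top\Sigma^\pm w} = \sigma_{Z^\pm}$ and $w^\top\Sigma^\pm w = \sigma_{Z^\pm}^2$. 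Collecting the prefactors, the positive term enters with coefficient $-P(Y=+1)$ (because of the leading minus sign inherited from differentiating $1 - \phi(\cdot)$) and the negative term with coefficient $+P(Y=-1)$, which reproduces exactly \eqref{eq:col.error.short}.

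I do not anticipate a genuine obstacle here, since the argument is essentially a specialization of the already-established Lemma \ref{lemma.derivative.phi} combined with the identity $F_{01}=F_{n01}$. The only points requiring care are the bookkeeping of the two sign conventions (the minus inherited from $1-\phi$ in the positive class, versus the chain-rule sign internal to Lemma \ref{lemma.derivative.phi}), and the standing hypothesis $w\neq 0$: this ensures $\sigma_{Z^\pm}^2 = w^\top\Sigma^\pm w > 0$ (assuming $\Sigma^\pm$ positive definite), so that $g(w)$ is well-defined and differentiable and the denominators $\sigma_{Z^\pm}^2$ do not vanish. Consequently the gradient formula holds precisely on the domain $w\neq 0$ stated in the theorem.
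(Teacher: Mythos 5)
Your proposal is correct and follows the same route as the paper, which simply declares the theorem an immediate corollary of Lemma \ref{lemma.derivative.phi}; you have merely spelled out the specialization $(\mu,\Sigma)\mapsto(\mu^\pm,\Sigma^\pm)$, the sign from differentiating $1-\phi(\cdot)$, and the identification $F_{01}=F_{n01}$ from Theorem \ref{theorem.main.0_1}, all of which the paper leaves implicit.
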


\begin{proof} 
Theorem \ref{col.error.short} is an immediate corollary of  the result of Lemma \ref{lemma.derivative.phi}.
\end{proof}

We now apply analogous  derivation to \eqref{def.RL} in order to obtain a closed form expression of $F_{rank}(w)$ and its gradient under the assumption that  $w^\top \left (X^- - X^+\right )$ obeys normal distribution. 
As in the case of $F_{01}(w)$, the smoothness of $F_{rank}(w)$ for $w\neq 0$ follows from the smoothness of the CDF of  $w^\top \left (X^- - X^+\right ) $. 
Let us assume that ${X}^+$ and ${X}^-$ have a joint  distribution, with the following mean  and  covariance 
\begin{equation*}
\mu = \begin{pmatrix} \mu^+ \\ \mu^- \end{pmatrix}~~\text{and}~~~ \Sigma = 
\begin{pmatrix} \Sigma^{++} & \Sigma^{+-} \\ \Sigma^{-+} & \Sigma^{--} \end{pmatrix}.
\end{equation*}

From  \cite{tong} we have\\

\begin{theorem} \label{T2}
If the distribution of ${X}^+$ and ${X}^-$ is Gaussian then, for any $w\in R^d$,
\begin{equation} \label{z}
Z = w^\top \left({X}^--{X}^+\right) \sim \mathcal{N}\left(\mu_Z, \sigma^2_Z\right),~~~\text{where}
\end{equation}
\begin{equation} \label{zp}
\begin{aligned}
 \mu_Z &= w^\top\left(\mu^- - \mu^+\right)~~\text{and}~~\\
\sigma_Z &= \sqrt{w^\top \left( \Sigma^{--} + \Sigma^{++} - \Sigma^{-+} - \Sigma^{+-} \right)w}.
\end{aligned}
\end{equation}
\end{theorem}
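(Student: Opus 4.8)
The plan is to realize $Z$ as a single scalar linear functional of a jointly Gaussian random vector, and then invoke the closure of the multivariate normal under linear maps --- exactly the unnamed lemma stated just before Theorem~\ref{theorem.main.0_1}. Writing the difference $X^- - X^+$ as a linear image of the stacked vector $(X^+, X^-)$ reduces the whole claim to a routine block computation of a mean and a quadratic form, so no new distributional machinery is needed beyond the closure property already in hand.

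Concretely, I would first introduce the stacked random vector
\begin{equation*}
\widetilde{X} = \begin{pmatrix} X^+ \\ X^- \end{pmatrix} \in \mathbb{R}^{2d}, \qquad \widetilde{X} \sim \mathcal{N}(\mu, \Sigma),
\end{equation*}
with $\mu$ and $\Sigma$ the block mean and block covariance given in the statement, together with the fixed coefficient vector $\bar{w} = \begin{pmatrix} -w \\ w \end{pmatrix} \in \mathbb{R}^{2d}$. Then
\begin{equation*}
Z = w^\top X^- - w^\top X^+ = \bar{w}^\top \widetilde{X},
\end{equation*}
so $Z$ is a scalar linear functional of a Gaussian vector. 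By the closure property (the lemma preceding Theorem~\ref{theorem.main.0_1}, or directly \cite{tong}), $Z$ is normally distributed with mean $\bar{w}^\top \mu$ and variance $\bar{w}^\top \Sigma \bar{w}$.

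It then remains only to evaluate these two quantities and match them to \eqref{zp}. The mean is
\begin{equation*}
\bar{w}^\top \mu = -w^\top \mu^+ + w^\top \mu^- = w^\top (\mu^- - \mu^+),
\end{equation*}
which is $\mu_Z$. For the variance I would form the block product $\Sigma \bar{w}$ and then contract with $\bar{w}^\top$, obtaining
\begin{equation*}
\bar{w}^\top \Sigma \bar{w} = w^\top \left( \Sigma^{++} + \Sigma^{--} - \Sigma^{+-} - \Sigma^{-+} \right) w,
\end{equation*}
which is $\sigma_Z^2$. The only point demanding any care is the sign bookkeeping in this quadratic form: the two off-diagonal blocks $\Sigma^{+-}$ and $\Sigma^{-+}$ each pick up a minus sign from the $(-w, w)$ pattern of $\bar{w}$, while the two diagonal blocks enter positively. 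This is the sole place where a slip is possible, but it is elementary rather than a genuine obstacle, since the entire argument is a direct application of the linear-transformation lemma rather than a fresh distributional derivation.
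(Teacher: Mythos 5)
Your proof is correct. Note that the paper offers no proof of this theorem at all---it is stated as a result imported directly from \cite{tong} (``From \cite{tong} we have'')---so your derivation fills in exactly what the citation stands in for: realizing $Z$ as the scalar linear functional $\bar{w}^\top \widetilde{X}$ of the stacked vector $\widetilde{X} = (X^+, X^-) \in \mathbb{R}^{2d}$ with $\bar{w} = (-w, w)$, invoking the closure of the multivariate normal under linear maps (the unnamed lemma preceding Theorem \ref{theorem.main.0_1}), and evaluating the mean and the block quadratic form. Your sign bookkeeping is right: the diagonal blocks $\Sigma^{++}, \Sigma^{--}$ enter positively and the off-diagonal blocks $\Sigma^{+-}, \Sigma^{-+}$ each acquire a minus sign, giving $\bar{w}^\top \Sigma \bar{w} = w^\top\left(\Sigma^{++} + \Sigma^{--} - \Sigma^{+-} - \Sigma^{-+}\right)w = \sigma_Z^2$. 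The one hypothesis worth stating explicitly is that $X^+$ and $X^-$ must be \emph{jointly} Gaussian, not merely marginally Gaussian: your opening display $\widetilde{X} \sim \mathcal{N}(\mu, \Sigma)$ assumes this, and it is what the paper intends (it introduces the joint mean and block covariance immediately before the theorem), but the theorem's phrasing ``the distribution of $X^+$ and $X^-$ is Gaussian'' leaves that reading implicit, and the conclusion can fail for marginally Gaussian but jointly non-Gaussian pairs.
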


In the  case when ${X}^+$ and ${X}^-$ are not Gaussian we again rely on the Central Limit Theorem to argue that
 $Z=w^\top (X^+-X^-)=\sum_{i=1}^d w_i(X_i^+-X_i^-)$ is approximately normal, with  mean 
 $w^\top\left(\mu^- - \mu^+\right)$ and variance  $w^\top \left( \Sigma^{--} + \Sigma^{++} - \Sigma^{-+} - \Sigma^{+-} \right)w$. 

We now derive the formulas for  $F_{rank}(w)$ and its gradient. \\

\begin{theorem} \label{T3}
 Assume that for  a given vector $w\neq 0$ the random variable $Z = w^\top \left({X}^--{X}^+\right)$ obeys normal distribution
$\mathcal{N}\left(\mu_Z, \sigma^2_Z\right)$ with $\mu_Z$ and  $\sigma_Z$ defined as in \eqref{zp}. 
 Then $F_{rank}(w) = F_{nrank}(w) $ with
\begin{equation} \label{F.AUC.smooth}
F_{nrank}(w) = 1 - \phi\left(\frac{\mu_Z}{\sigma_Z}\right),
\end{equation}
where $\phi$ is the CDF of the standard normal distribution, as in Theorem  \ref{theorem.main.0_1}. 
\end{theorem}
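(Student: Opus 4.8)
The plan is to mirror the derivation of Theorem~\ref{theorem.main.0_1}: collapse the bivariate ranking-loss probability into a single scalar normal tail probability and then read off a value of the standard normal CDF. The argument is short precisely because the hard modeling work---justifying that the relevant margin is (approximately) normal---has already been carried out in Theorem~\ref{T2} and is imported here as the hypothesis $Z\sim\mathcal{N}(\mu_Z,\sigma_Z^2)$. Under that hypothesis the claim is an exact identity $F_{rank}(w)=F_{nrank}(w)$, so I only need to evaluate one probability in closed form.

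First I would use linearity of $f(x;w)=w^\top x$ to rewrite the event defining $F_{rank}$ in \eqref{def.RL} entirely in terms of the scalar variable $Z=w^\top(X^--X^+)$ from \eqref{z}. Since $f(X^+;w)-f(X^-;w)=w^\top(X^+-X^-)=-Z$, the ranking-loss event is exactly a half-line event for $Z$; this step uses nothing but linearity of $w^\top(\cdot)$, and no distributional assumption enters yet. This is the analogue of splitting $F_{01}$ into class-conditional pieces in Lemma~\ref{ER_prob}, except that here a single margin variable already encodes the comparison between the two classes, so there is no prior-probability weighting to track.

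Next I would invoke the assumed normality $Z\sim\mathcal{N}(\mu_Z,\sigma_Z^2)$ with $\mu_Z,\sigma_Z$ as in \eqref{zp}. Standardizing, $(Z-\mu_Z)/\sigma_Z$ is a standard normal variable, so the probability of the relevant half-line reduces to evaluating $\phi$ at the standardized threshold $(0-\mu_Z)/\sigma_Z=-\mu_Z/\sigma_Z$; the symmetry identity $\phi(-t)=1-\phi(t)$ then rewrites this as a value of $\phi$ at $\mu_Z/\sigma_Z$, matching the closed form \eqref{F.AUC.smooth}. All quantities are well defined for $w\neq 0$ and non-degenerate classes, since $\sigma_Z^2=w^\top(\Sigma^{--}+\Sigma^{++}-\Sigma^{-+}-\Sigma^{+-})w$ is the variance of $Z$ and is therefore a genuine, strictly positive quadratic form.

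The step I expect to require the most care---and the only real subtlety in an otherwise routine calculation---is the sign/orientation bookkeeping. Because $Z$ is defined with $X^-$ listed first, one must be deliberate about whether the ranking-loss event maps to $\{Z>0\}$ or $\{Z\le 0\}$ and, correspondingly, whether the argument carried into $\phi$ is $\mu_Z/\sigma_Z$ or its negative; an orientation slip here silently replaces $F_{rank}$ by its complement, i.e.\ the AUC. I would therefore fix the convention at the outset and confirm it with a limiting sanity check---a well-separated classifier, for which $w^\top\mu^+>w^\top\mu^-$ forces $\mu_Z<0$, should return a ranking loss below $1/2$---so that the sign entering \eqref{F.AUC.smooth} is unambiguous before the symmetry identity is applied.
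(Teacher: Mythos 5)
Your route is the same as the paper's own proof---reduce the ranking-loss probability to a half-line event for $Z$, standardize, and use the symmetry of $\phi$---so there is no methodological gap. The genuine problem sits in your third paragraph, and it is precisely the orientation slip your fourth paragraph warns about. By your own (correct) algebra in the second paragraph, $f(X^+;w)-f(X^-;w)=-Z$, so the event in \eqref{def.RL} is $\{-Z<0\}=\{Z>0\}$, the \emph{upper} tail of $Z$, and under the normality hypothesis its probability is $P(Z>0)=1-\phi\left(-\mu_Z/\sigma_Z\right)=\phi\left(\mu_Z/\sigma_Z\right)$. What your third paragraph actually computes---``$\phi$ evaluated at the standardized threshold $-\mu_Z/\sigma_Z$,'' then symmetry to rewrite it as $1-\phi(\mu_Z/\sigma_Z)$---is $P(Z\le 0)$, the lower tail. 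The claimed ``match'' with \eqref{F.AUC.smooth} is therefore obtained by silently replacing the event by its complement. Carried out consistently with your second paragraph, the derivation yields $F_{rank}(w)=\phi(\mu_Z/\sigma_Z)$, the complement of the stated closed form (the two agree only when $\mu_Z=0$). Your own sanity check settles this decisively and against the statement: for a well-separated classifier $\mu_Z=w^\top(\mu^--\mu^+)<0$, so $\phi(\mu_Z/\sigma_Z)<1/2$ (a sensible ranking loss), while $1-\phi(\mu_Z/\sigma_Z)>1/2$. In other words, the check you propose refutes \eqref{F.AUC.smooth} rather than confirming it; you cannot be faithful to \eqref{def.RL} and \eqref{zp} and still land on the theorem as printed.

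For context, the paper's own proof commits the same sign error: it writes $1-P\bigl((Z-\mu_Z)/\sigma_Z\le -\mu_Z/\sigma_Z\bigr)=1-\phi(\mu_Z/\sigma_Z)$, dropping the minus sign in the argument of $\phi$ (the middle expression equals $1-\phi(-\mu_Z/\sigma_Z)=\phi(\mu_Z/\sigma_Z)$), and its first display labels as $F_{AUC}$ a quantity that is actually $1-F_{AUC}$. So Theorem \ref{T3} as stated is off by complementation given the surrounding definitions; the correct identity is $F_{nrank}(w)=\phi(\mu_Z/\sigma_Z)$, equivalently the printed formula with $\mu_Z$ redefined as $w^\top(\mu^+-\mu^-)$. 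You have faithfully reproduced the paper's approach, including its bug. To make your write-up sound, fix the orientation at the outset (the ranking-loss event is $\{Z>0\}$), actually execute the sanity check instead of asserting the match, and then either prove the corrected identity or explicitly flag the sign error in the statement.
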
 
\begin{proof} 
First, consider that
\begin{equation}
\begin{aligned}
F_{AUC}(w) &= 1-\mathbb{E}_{\mathcal{X}^+,\mathcal{X}^-} \left [\mathbbm{1}\left [f\left (X^+;w\right )> f\left (X^-;w\right )\right ] \right ]  \\
 &= 1 - P\left (w^T {X}^+ >w^T {X}^-\right ) \\
& = 1 - P\left (w^T \left (X^- - X^+\right ) < 0 \right ).
\end{aligned}
\end{equation}

Now, by using Theorem \ref{T2}  we conclude that
\begin{equation*}
\begin{aligned}
 F_{rank}(w) &= 1 - P(w^\top \left (X^- - X^+) < 0 \right ) = 1 - P\left(Z \leq 0\right) \\
&= 1 - P\left(\frac{Z - \mu_Z}{\sigma_Z} \leq \frac{ - \mu_Z}{\sigma_Z}\right) = 1 - \phi\left(\frac{\mu_Z}{\sigma_Z}\right),
\end{aligned}
\end{equation*}
where the random variable $Z$ is defined in \eqref{z}, with the stated mean and standard deviation in \eqref{zp}.
\end{proof}
%

\begin{theorem} \label{col.AUC}
Under conditions of  Theorem \ref{T3} the gradient of  $F_{rank}(w)$ is 
\begin{equation}\label{eq:gradrank}
\begin{aligned}
&\nabla_w F_{rank}(w)= \\&
 - \frac{1}{\sqrt{2 \pi}} \exp\left (-\frac{1}{2} 
\left (\frac{\mu_Z}{\sigma_Z}\right )^2\right )\left 
( \frac{\sigma_Z \cdot \hat{\mu} -  {\frac{\mu_Z}{\sigma_Z}}
\cdot {\hat{\Sigma}} w}{\sigma_Z^2} \right )
\end{aligned}
\end{equation}
where $\hat{\mu} = \mu^- - \mu^+$ and $\hat{\Sigma} = \Sigma^{--} + \Sigma^{++} - \Sigma^{-+} - \Sigma^{+-}$, and $\mu_Z$ and $\sigma_Z$ are defined as in \eqref{zp}.
\end{theorem}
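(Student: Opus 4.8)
The plan is to recognize that Theorem \ref{col.AUC} is, exactly like Theorem \ref{col.error.short}, an immediate specialization of Lemma \ref{lemma.derivative.phi}. By Theorem \ref{T3} we already have the closed form $F_{rank}(w) = 1 - \phi(\mu_Z/\sigma_Z)$, so the first step is to observe that the argument of $\phi$ is precisely of the form $g(w) = w^\top\mu/\sqrt{w^\top\Sigma w}$ treated in the lemma, under the substitution $\mu \mapsto \hat{\mu} = \mu^- - \mu^+$ and $\Sigma \mapsto \hat{\Sigma} = \Sigma^{--} + \Sigma^{++} - \Sigma^{-+} - \Sigma^{+-}$. Indeed, with these identifications $\mu_Z = w^\top\hat{\mu}$ and $\sigma_Z = \sqrt{w^\top\hat{\Sigma}w}$ by \eqref{zp}, so that $\mu_Z/\sigma_Z = g(w)$ verbatim.

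Next I would differentiate, pulling the constant $1$ out by linearity to get $\nabla_w F_{rank}(w) = -\nabla_w\phi(g(w))$. Substituting $\mu = \hat{\mu}$ and $\Sigma = \hat{\Sigma}$ into the expression from Lemma \ref{lemma.derivative.phi} then produces the claimed formula directly, with the leading minus sign accounting for the $1 - \phi$ structure of $F_{rank}$ and with $\sqrt{w^\top\hat\Sigma w}$ and $w^\top\hat\Sigma w$ rewritten as $\sigma_Z$ and $\sigma_Z^2$ in the numerator and denominator, respectively.

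The only points requiring care are routine. First, the formula is valid only where $\sigma_Z > 0$, i.e.\ where $w^\top\hat{\Sigma}w > 0$; since $\hat{\Sigma}$ is the covariance of $X^- - X^+$ it is symmetric and positive semidefinite, and in the generic positive definite case $\sigma_Z$ is strictly positive for every $w\neq 0$, so that $g$, and hence $F_{rank}$, is differentiable there. Second, Lemma \ref{lemma.derivative.phi} implicitly uses the symmetry of $\Sigma$ when differentiating $w^\top\Sigma w$ to obtain $2\Sigma w$, and $\hat{\Sigma}$ inherits this symmetry from the covariance blocks, so the substitution is legitimate. I do not expect any genuine obstacle here: the analytic content lies entirely in Lemma \ref{lemma.derivative.phi}, and the present statement is a direct corollary obtained by the substitution $(\mu,\Sigma)\mapsto(\hat\mu,\hat\Sigma)$ together with the sign flip from the $1-\phi$ form.
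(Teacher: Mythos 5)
Your proposal is correct and follows exactly the paper's own route: the paper proves Theorem \ref{col.AUC} as an immediate corollary of Lemma \ref{lemma.derivative.phi}, which is precisely your substitution $(\mu,\Sigma)\mapsto(\hat{\mu},\hat{\Sigma})$ applied to $F_{rank}(w)=1-\phi(\mu_Z/\sigma_Z)$ with the sign flip from the $1-\phi$ form. Your additional remarks on $\sigma_Z>0$ and the symmetry of $\hat{\Sigma}$ are sound and only make explicit what the paper leaves implicit.
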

\begin{proof} 
Theorem \ref{col.AUC} is an immediate corollary of  the result of Lemma \ref{lemma.derivative.phi}.
\end{proof}

In the next section, we will apply   \emph{L-BFGS method with Wolfe line-search} \cite{nocedal_BFGS} to optimize functions
$F_{n01}$ and $F_{nrank}$ for a variety of artificial and real data sets and compare the results of this optimization to optimizing  $\hat F_{log}(w)$ and $\hat F_{hinge}(w)$, respectively.  Some of the  standard data sets that we use
do not obey   Gaussian distribution, however, as we will show, our method achieves very good results on most of the data sets.  We believe that this is due to the fact that by using the assumption that $w^\top X$  and $w^\top \left (X^- - X^+\right )$ are nearly Gaussian (rather than the data itself) we obtain useful approximation for the expected error and expected ranking loss. 

Note that $F_{01}$ and $F_{rank}$ as well as their approximations $F_{n01}$ and $F_{nrank}$,  are not well defined for $w=0$. On the other hand, all these functions are invariant to the scale of $w$, that is $F_{n01}(w)=F_{n01}(\alpha w)$ and $F_{nrank}(w)=F_{nrank}(\alpha w)$, for any $\alpha>0$. 
Thus, ideally,  functions $F_{n01}$ and $F_{nrank}$  should be optimized subject to a constraint $\|w\|=1$, however, since this constraint make optimization harder and yet does not have to hold exactly,  instead of imposing it directly, we include a penalty  $\lambda (1-\|w\|^2)^2$.  We can tune this $\lambda$ the same way as the regularization parameter for logistic regression, although in our experiments a value of $0.001$ worked well for most of the data sets.

\section{Numerical Experiments}\label{sec.numerical}

Our first experiment is provided to illustrate  the assumption that $w^\top X^+$, $w^\top X^-$ and $w^\top (X^--X^+)$ may have near normal distributions even when the data distribution itself  not close to Gaussian. In Figure \ref{fig:normal}  we plot
empirical distributions of these  three  random variables  for the {\em a9a} data set (see description of data set below)
whose features are binary encodings of categorical values. We plot these distributions for two choices of  $w$--one used early in the training (left column) and one used close to the end of the training (right column). 
We observe that the random variables $w^\top X^-$, $w^\top X^+$, and $w^\top (X^--X^+)$  have almost perfectly normal distributions in both cases.

\begin{figure}[tph] 
\begin{subfigure}{0.47\textwidth}
  \centering
  \includegraphics[height=0.8\linewidth,width=\linewidth]{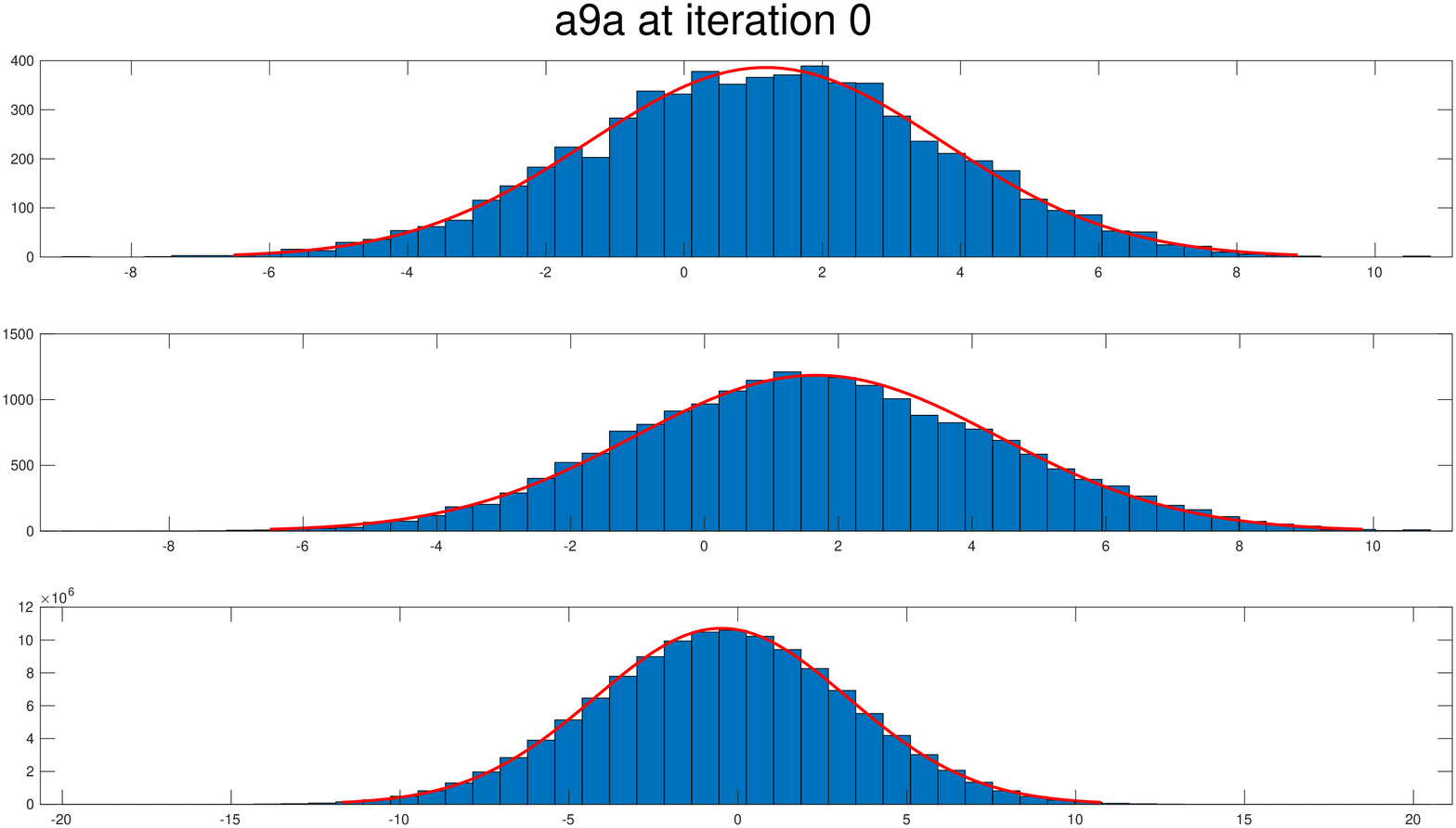}
  \end{subfigure}%
  \hfill
\begin{subfigure}{0.47\textwidth}
  \centering
  \includegraphics[height=0.8\linewidth,width=\linewidth]{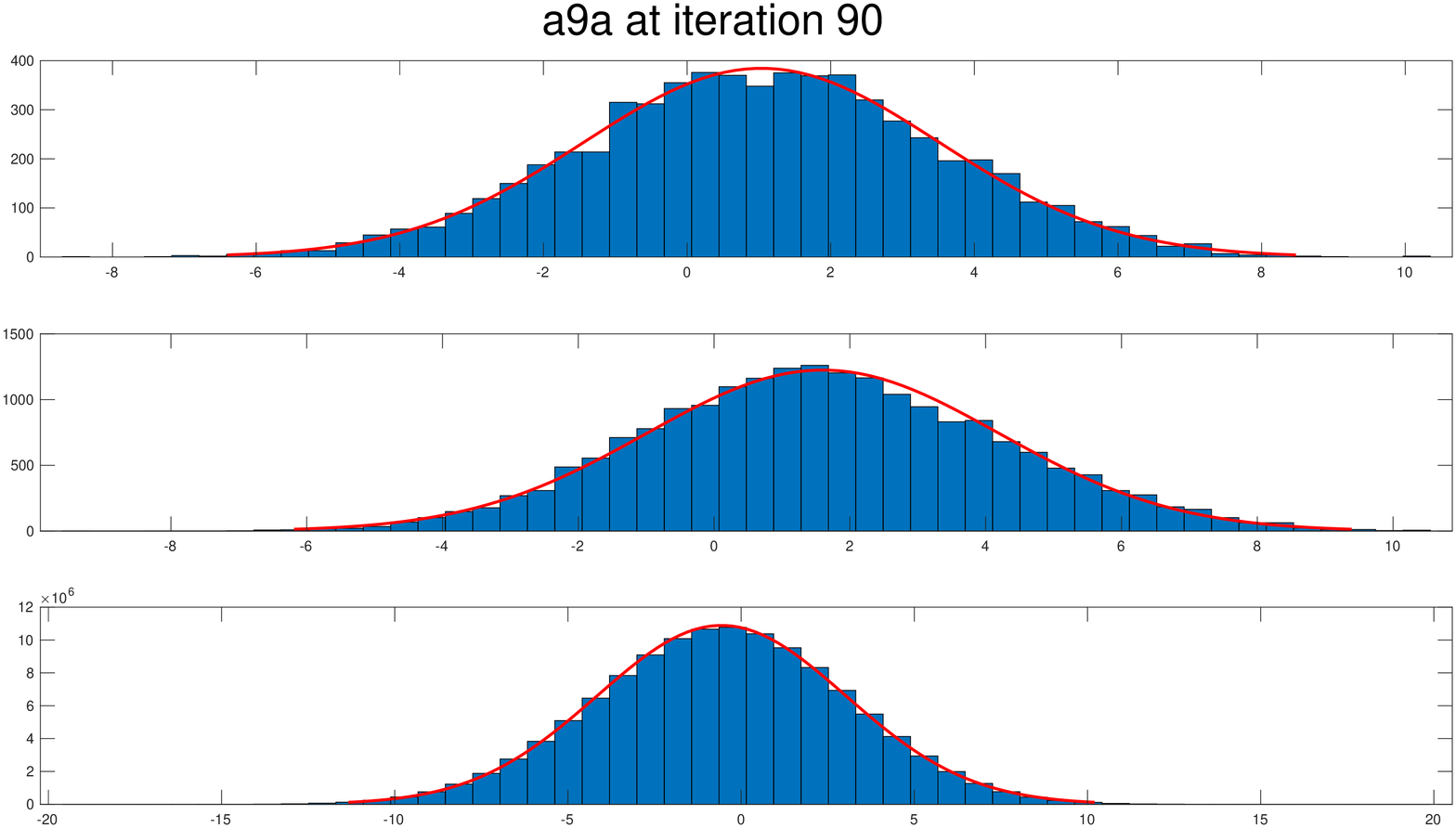}
\end{subfigure} %
\caption{Nearly normal distribution for $w^\top X^+$, $w^\top X^-$ and $w^\top (X^+-X^-)$ at different iterations during training.}
\label{fig:normal}
\end{figure}

We also present the plots for additional four datasets {\em diabetes}, {\em poker}, {\em svm3} and {\em segment} in Figure \ref{fig:normal1}. We observe that not in all cases random variables $w^\top X^-$, $w^\top X^+$, and $w^\top (X^--X^+)$  seem to have near normal distributions although it is more common  for $w^\top (X^--X^+)$. Nevertheless our proposed functions $F_{n01}$ and $F_{nrank}$ still seem to  provide useful approximations to empirical risk, as will be evident from the plots of these functions we present later in this section. 

\begin{figure}[tph] 
\begin{subfigure}{0.47\textwidth}
 \centering
  \includegraphics[width=\linewidth]{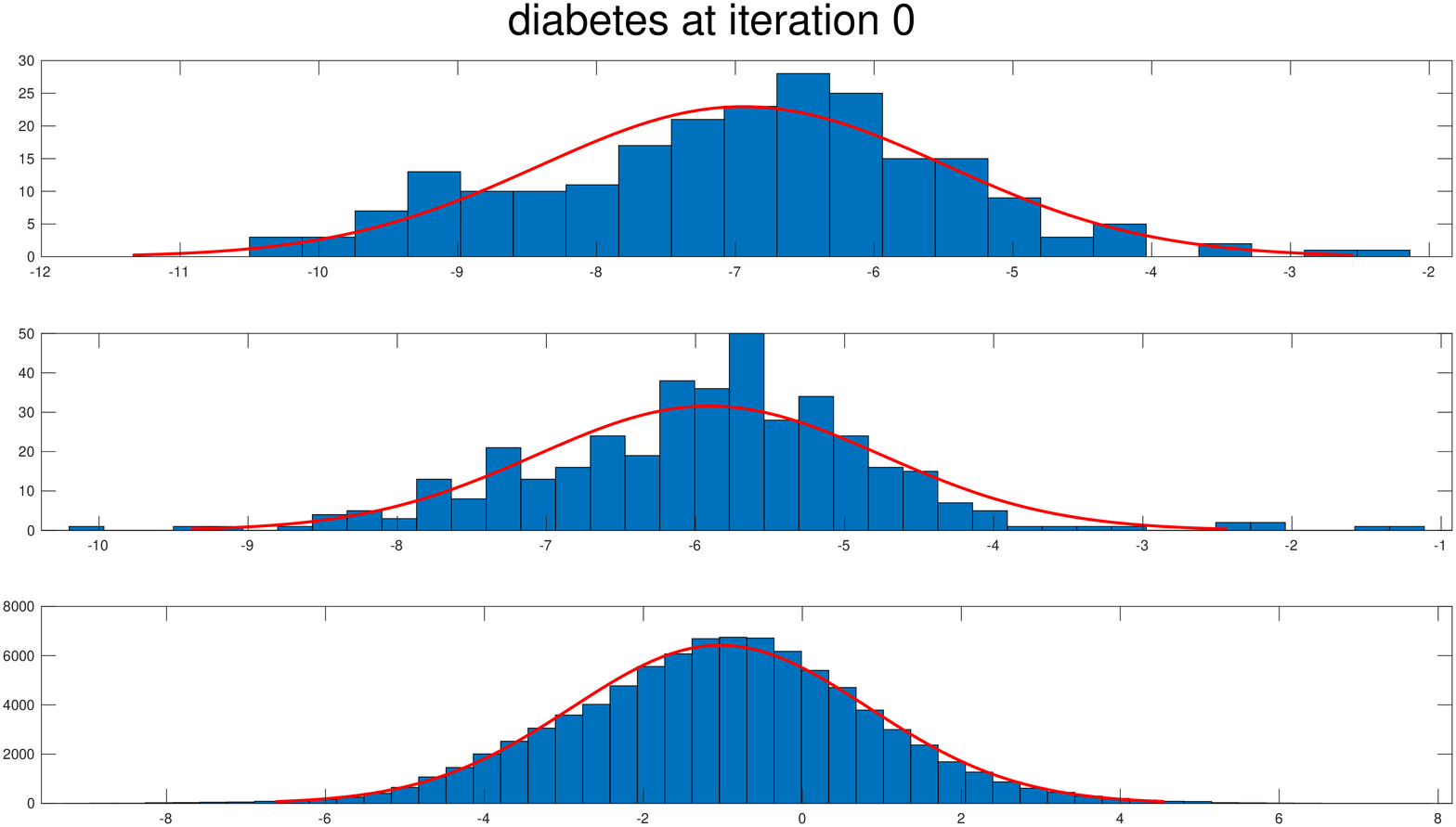}
  \end{subfigure}%
  \hfill
\begin{subfigure}{0.47\textwidth}
  \centering
  \includegraphics[width=\linewidth]{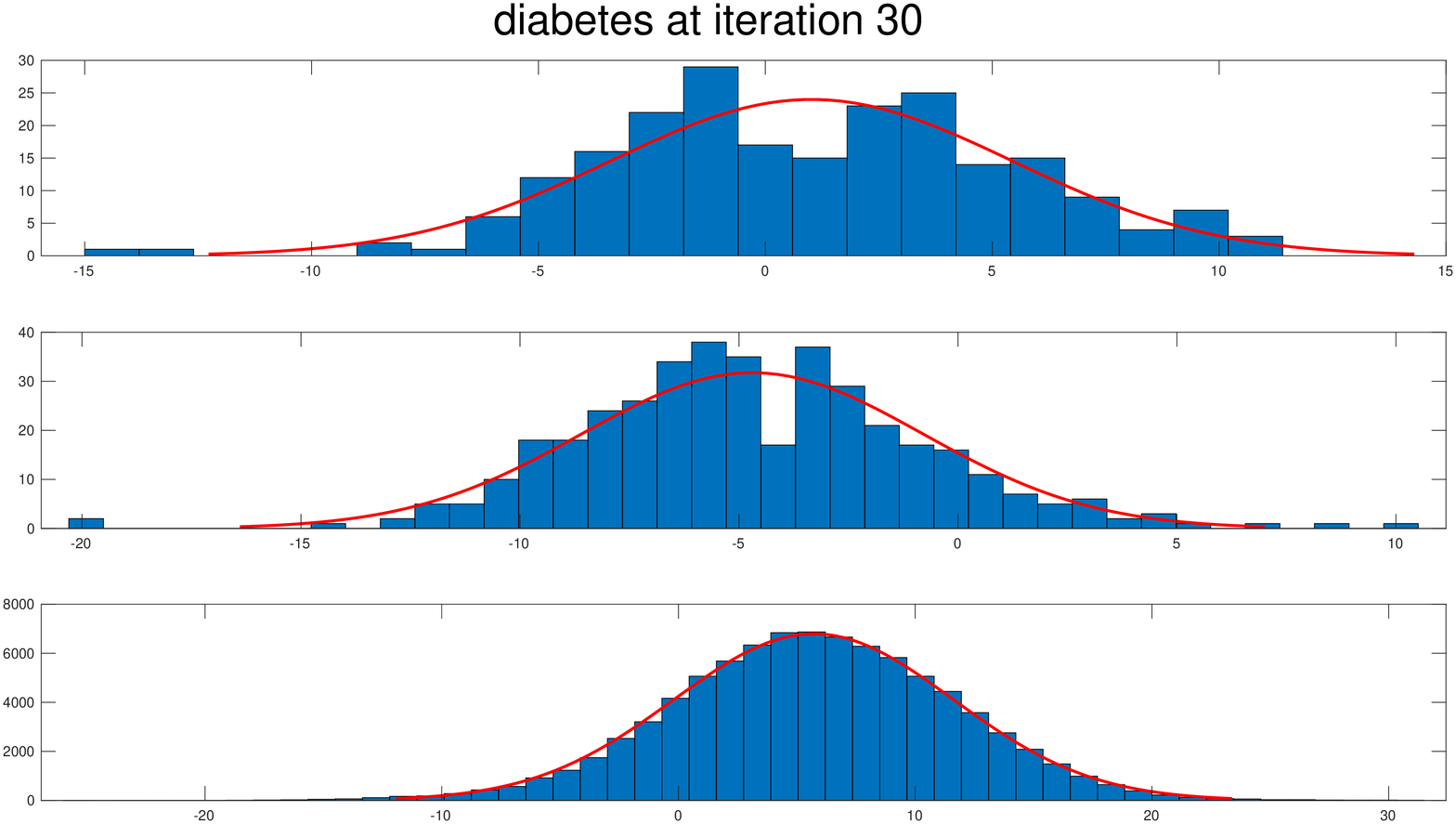}
\end{subfigure} %
\begin{subfigure}{0.47\textwidth}
  \centering
  \includegraphics[width=\linewidth]{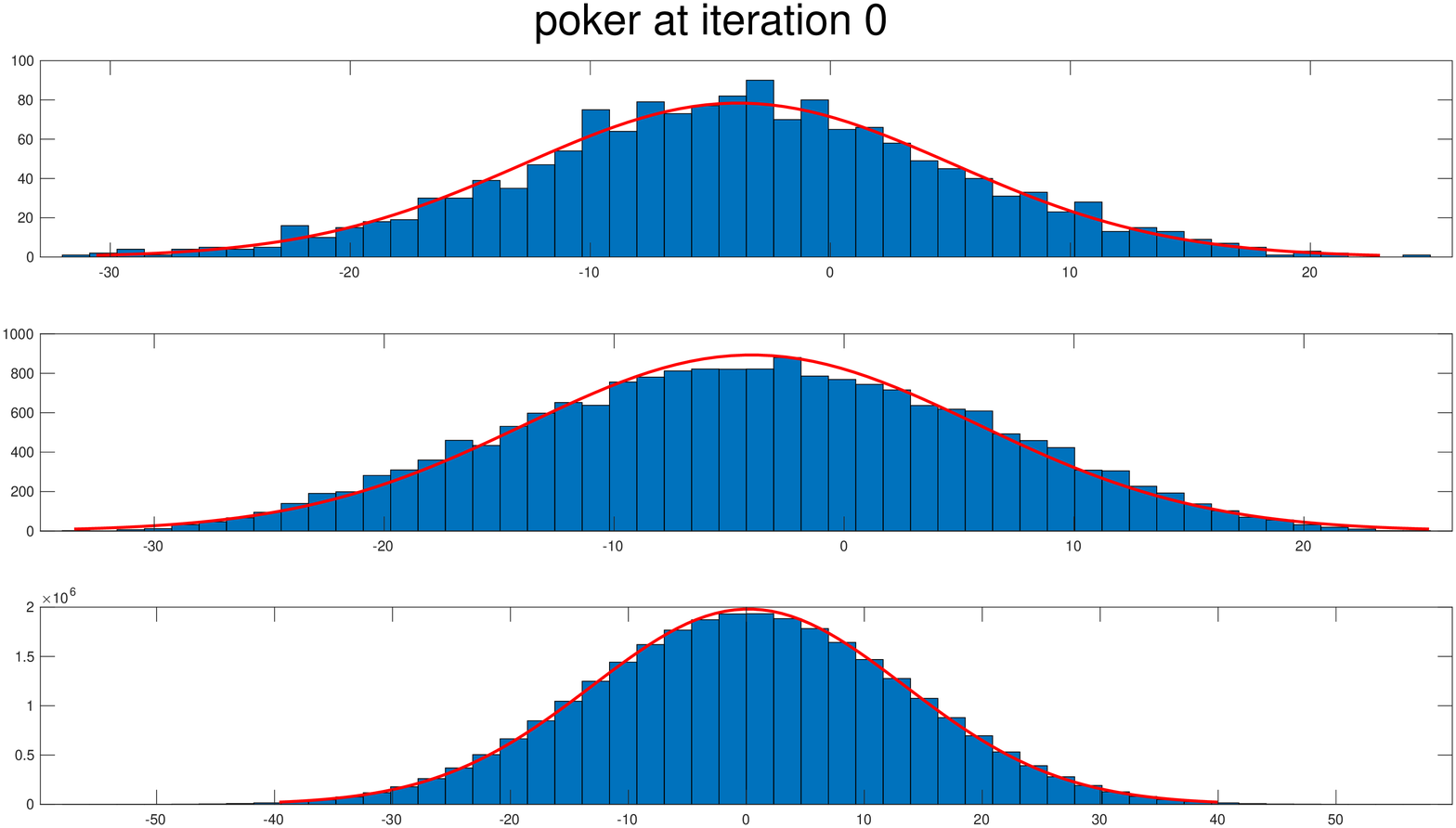}
  \end{subfigure}%
  \hfill
\begin{subfigure}{0.47\textwidth}
  \centering
  \includegraphics[width=\linewidth]{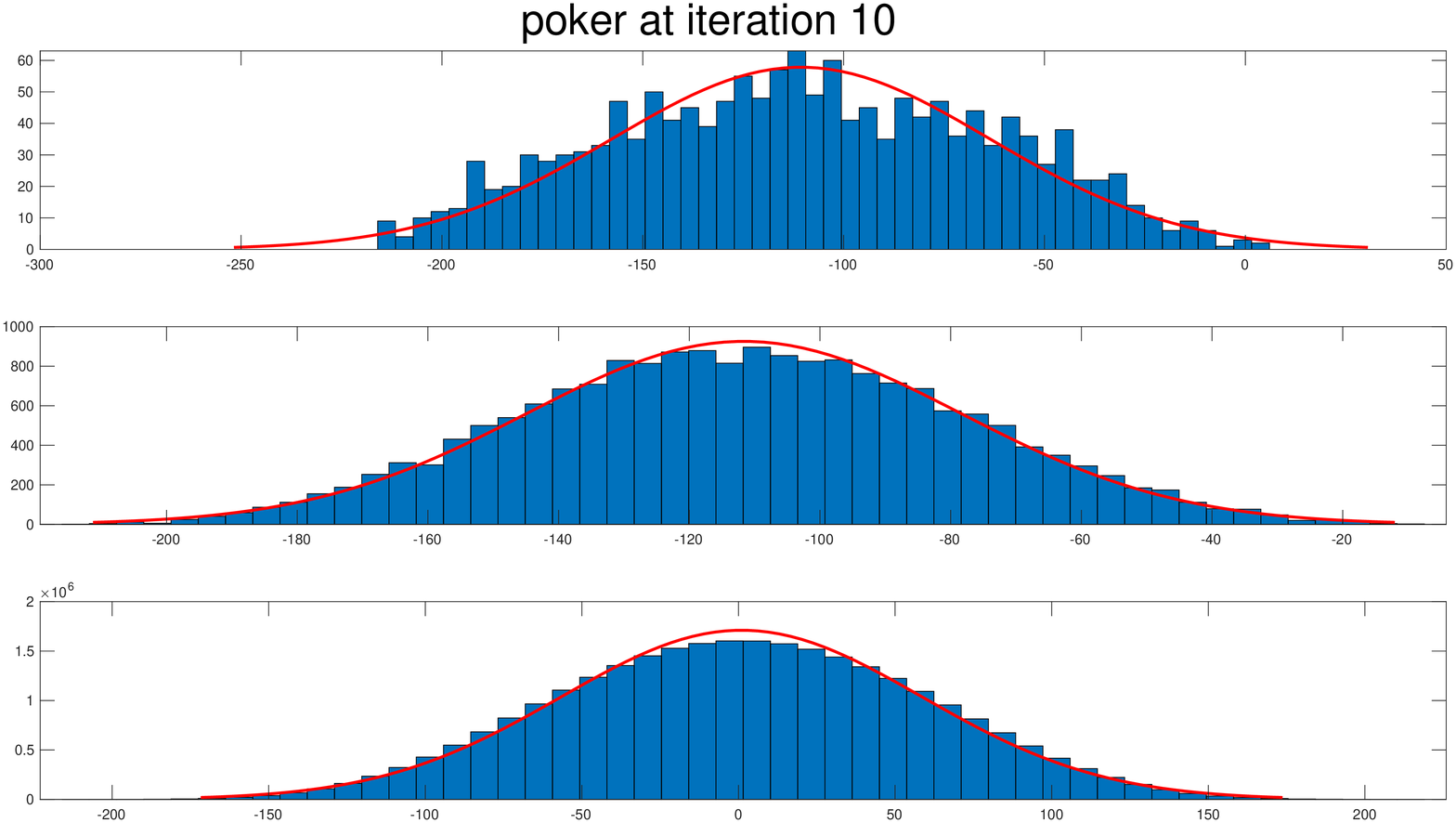}
\end{subfigure} %
\begin{subfigure}[t]{0.47\textwidth}
  \centering
  \includegraphics[width=\linewidth]{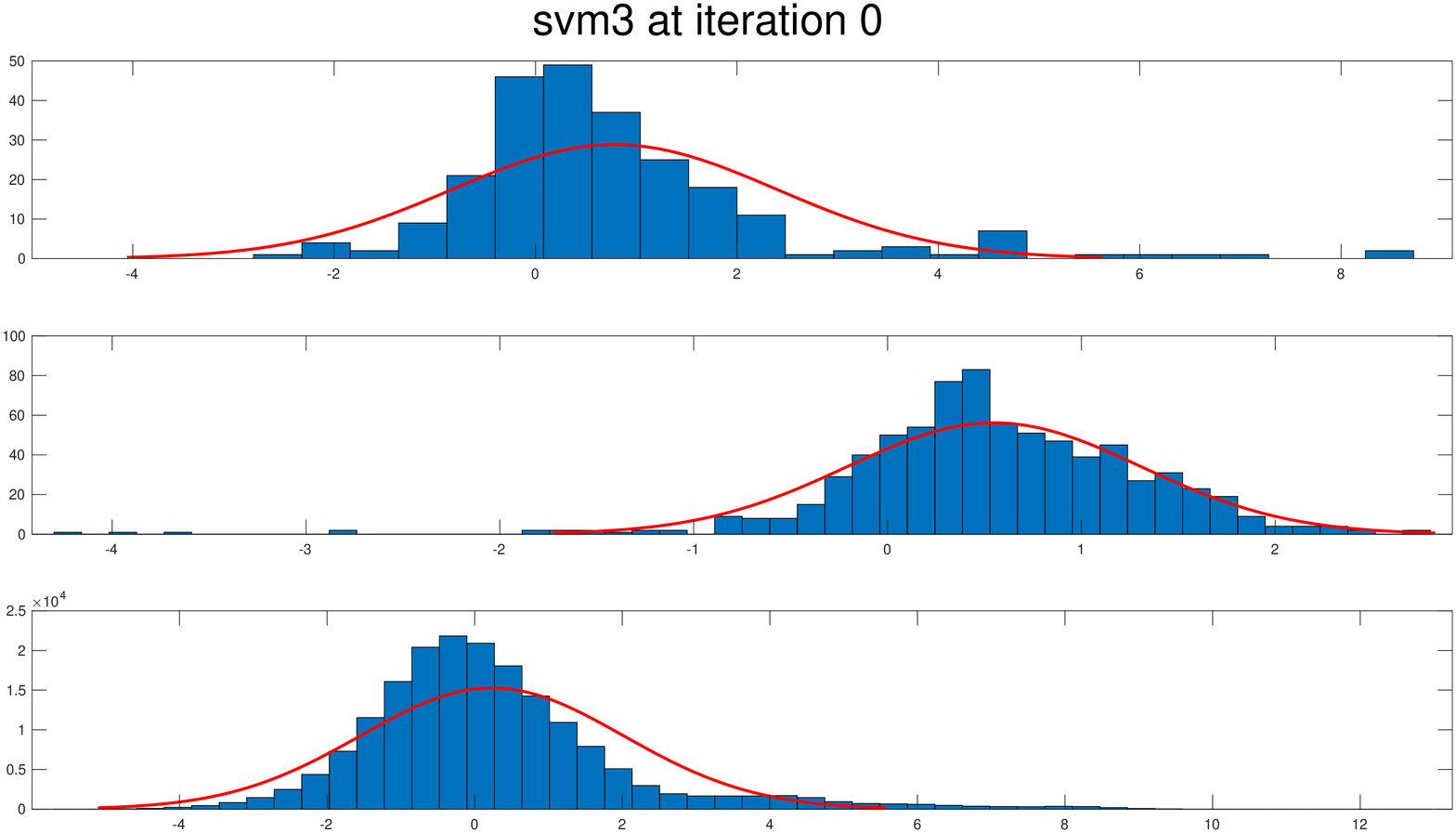}
  \end{subfigure}%
  \hfill
\begin{subfigure}[t]{0.47\textwidth}
  \centering
  \includegraphics[width=\linewidth]{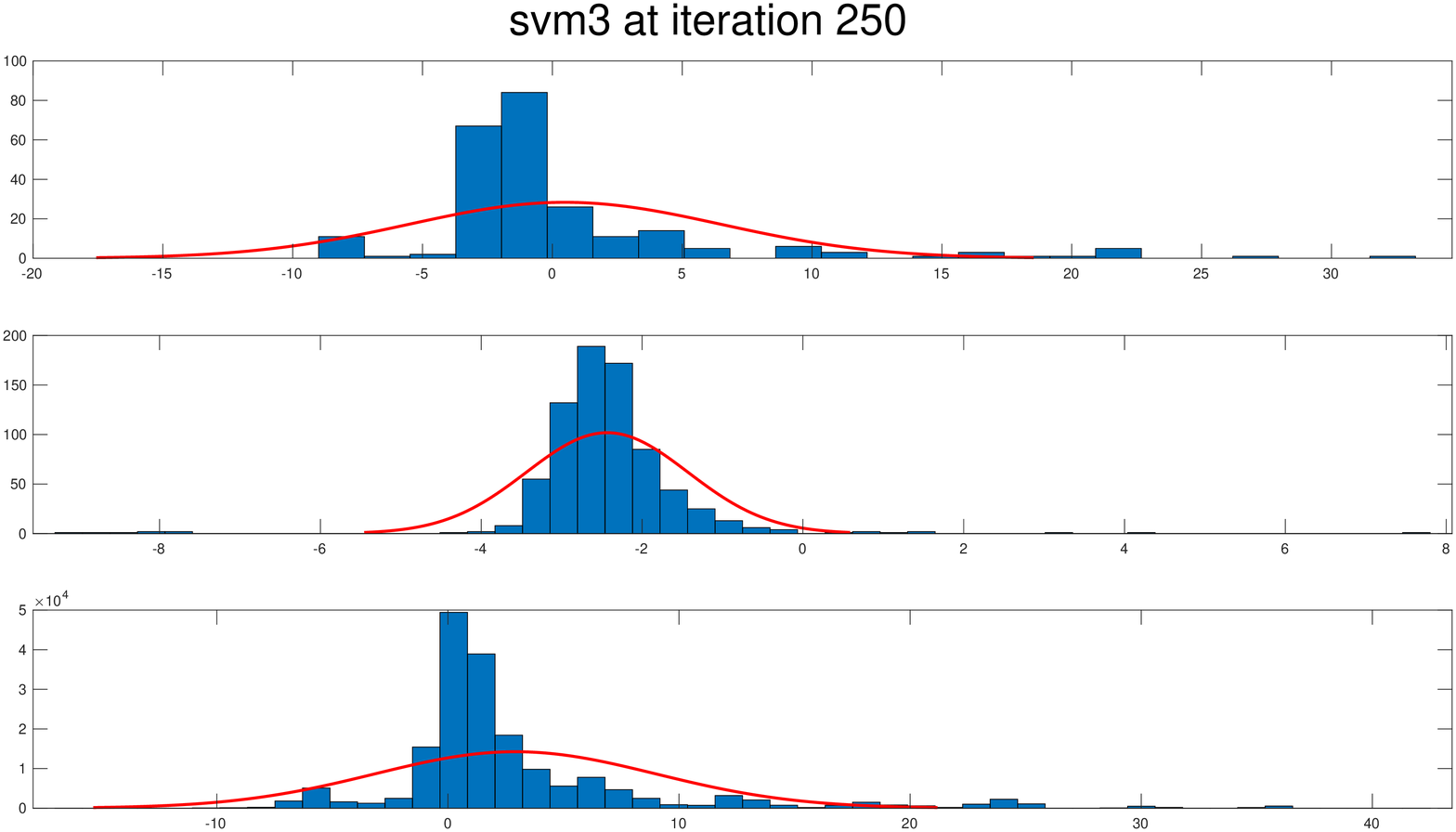}
\end{subfigure} %
\begin{subfigure}[t]{0.47\textwidth}
  \centering
  \includegraphics[width=\linewidth]{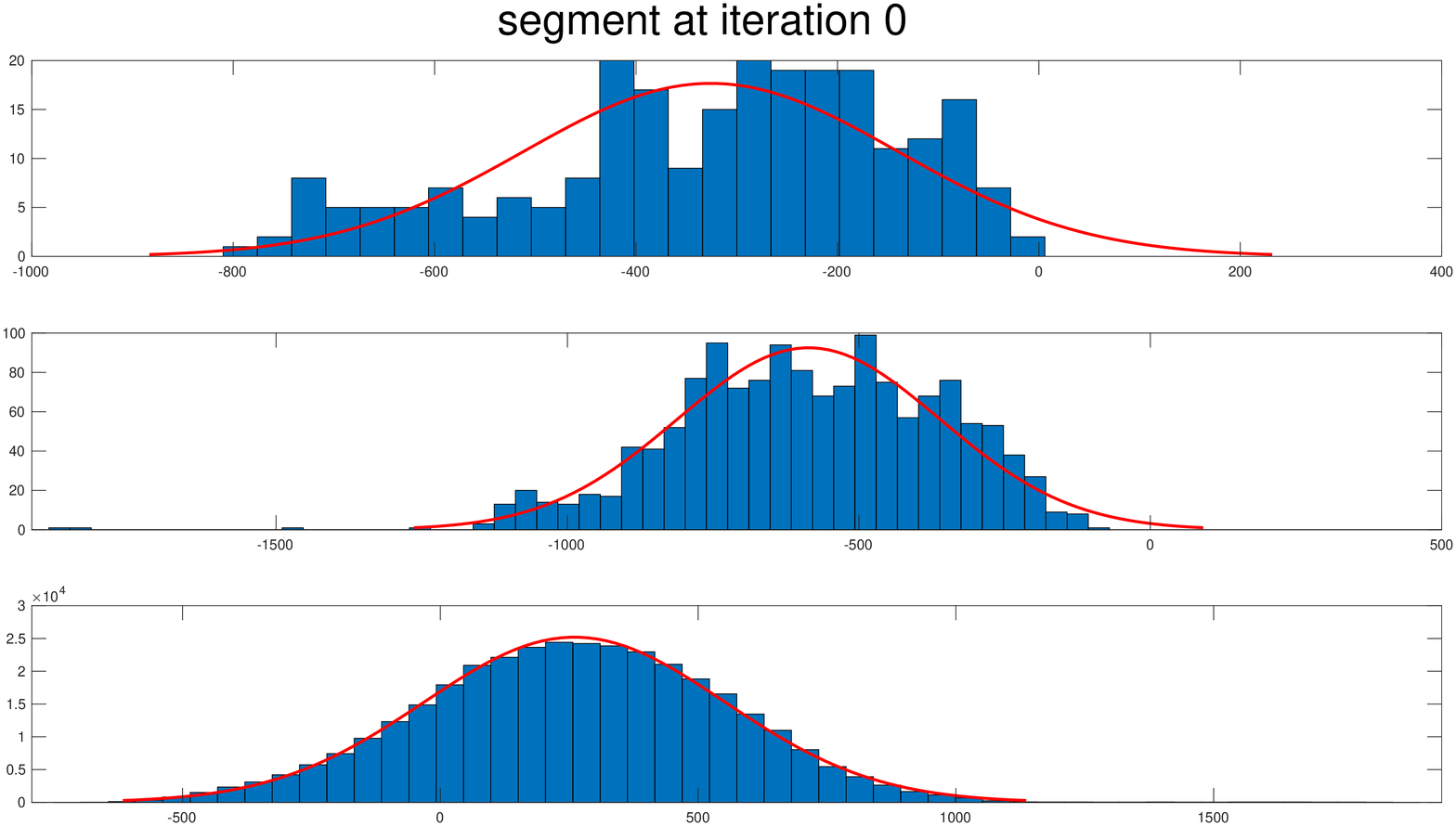}
  \end{subfigure}%
  \hfill
\begin{subfigure}[t]{0.47\textwidth}
  \centering
  \includegraphics[width=\linewidth]{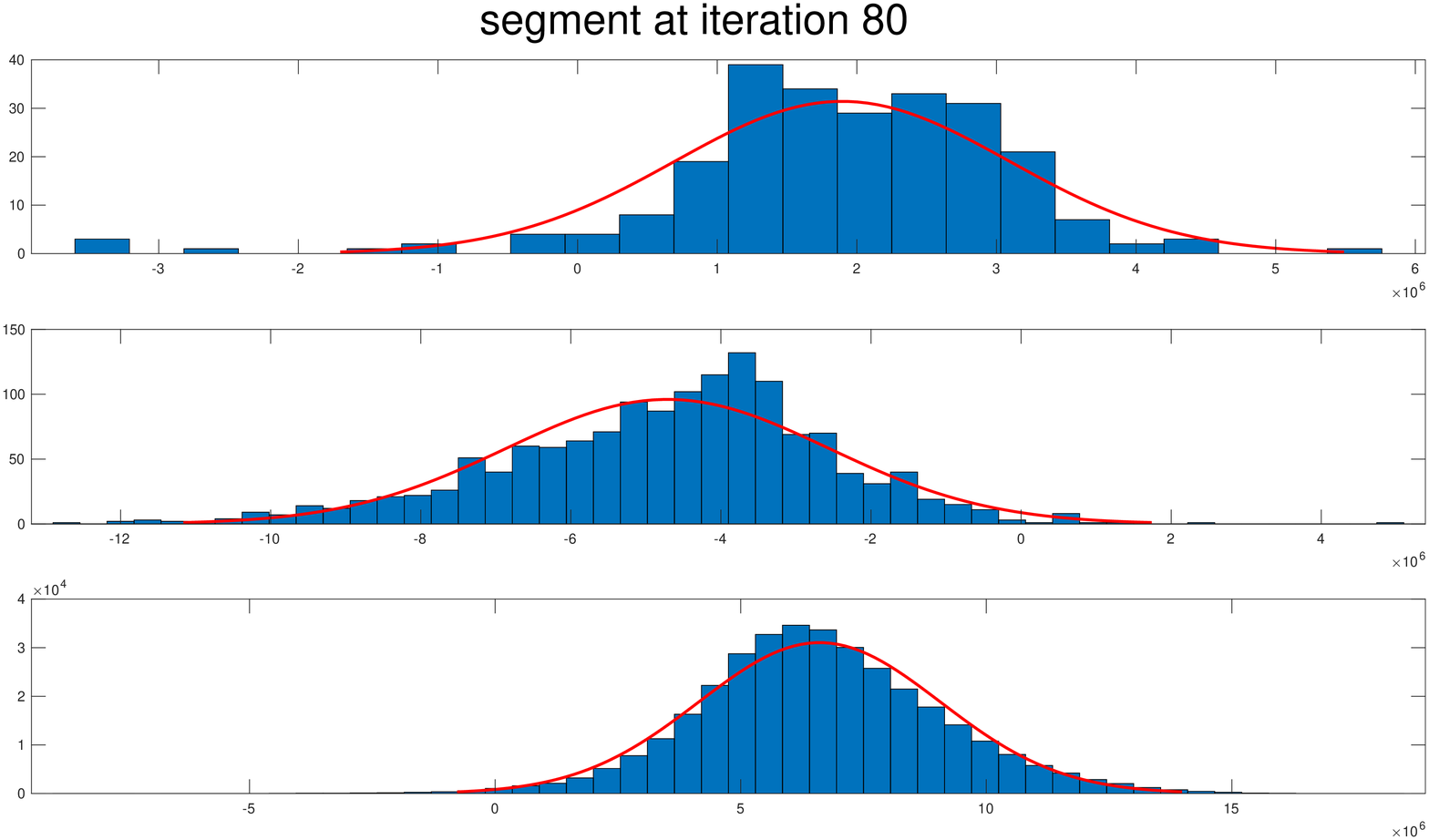}
\end{subfigure} %
\caption{Distribution for $w^\top X^+$, $w^\top X^-$ and $w^\top (X^+-X^-)$ at different iterations during training for {diabetes}.}
\label{fig:normal1}
\end{figure}


We next demonstrate that minimizing our proposed models $F_{n01}$ and $F_{nrank}$ results in good classifiers and efficient training methods. 

{\bf Method} Based on the results of the previous section, we propose the following method of training a linear classifier for a given training set. 
\begin{enumerate}
\item Using the positive and negative samples of the training set, compute the empirical estimates of $\mu^+$,  $\mu^-$, as well as $\Sigma^+$, $\Sigma^-$ for the case of $F_{n01}$ and   $\Sigma^{++}$, $\Sigma^{+-}$,  $\Sigma^{+-}$, and $\Sigma^{--}$
for the case of $F_{nrank}$. Note that number of samples for positive and negative parts are not balanced and it is not clear how to get the empirical estimations of $\Sigma^{+-}$ and $\Sigma^{--}$. Theoretically, the random vectors with positive and negative labels should be uncorrelated and thus $\Sigma^{+-}$ and $\Sigma^{+-}$ are zero matrices. If we use some sampling methods to get the estimations for $\Sigma^{+-}$ and $\Sigma^{+-}$, the actual performance is worse than taking them as zero matrices.
\item By setting $P({Y}=1)=n_+/n$ and $P({Y}=-1)=n_-/n$, $F_{n01}(w)$ is defined as in    \eqref{smooth.zero.one} and $\nabla F_{n01}(w)$ is defined as in \eqref{eq:col.error.short}.  Similarly, $F_{nrank}(w)$ is defined as in \eqref{F.AUC.smooth}  and $\nabla F_{nrank}(w)$ is defined as in \eqref{eq:gradrank}. 
\item For minimizing the prediction error,  apply L-BFGS method with Wolfe  line-search until $w_\epsilon$ is reached such that $\|\nabla  F^\lambda_{n01}(w) \|\leq \epsilon$, where $ F^\lambda_{n01}(w)=F_{n01}(w)+\lambda (1-\|w\|^2)^2$, for given a tolerance $\epsilon$. For ranking loss minimization apply the same to $ F^\lambda_{nrank}(w)=F_{nrank}(w)+\lambda (1-\|w\|^2)^2$. 
Return $w_\epsilon$ as the linear classifier. 
\end{enumerate}

{\bf Computational cost} We compare the performance of the classifiers obtained by the proposed method  to the state-of-the art linear classification methods. In particular, we compare optimizing $F^\lambda_{n01}$  vs.  
  regularized logistic regression, and optimizing $F^\lambda_{nrank}$  vs. regularized pairwise hinge loss. The pairwise hinge loss is chosen  as the most efficient surrogate for the ranking loss, because the cost of gradient computation is roughly ${\cal O}(n\log n)$, instead of ${\cal O}(n^+n^-)$.
   For logistic regression we optimize function $ \hat F^\lambda_{log}(w)= \hat F^\lambda_{log}(w)+\lambda \|w\|^2$
   and for pairwise hinge loss  we  optimized $\hat F^\lambda_{hinge}(w)= \hat F^\lambda_{hinge}(w)+\lambda \|w\|^2$. 
   We use L-BFGS method with Wolfe  line-search for all  four functions,  $F^\lambda_{n01}$, $F^\lambda _{nrank}$ , $ \hat F^\lambda_{log}(w)$ and $\hat F^\lambda_{hinge}(w)$.    
   Even though
   $\hat F^\lambda_{hinge}(w)$ is not smooth, it is known that  L-BFGS works very well  for such functions \cite{overton_BFGS,Jin_BFGS}. 
   Note that each gradient computation for  $ \hat F^\lambda_{log}(w)$ and $\hat F^\lambda_{hinge}(w)$  require 
   ${\cal O}(nd)$ and ${\cal O}(dn\log n)$ operations respectively (in the case of sparse data, the dependence on $d$ 
   reduces according to sparsity). On the other hand applying the same method to $F^\lambda_{n01}$ and $F^\lambda _{nrank}$ requires only 
${\cal O}(d^2)$   operations, (and when the data is sparse, the dependence on $d$ reduces to as low as ${\cal O}(d)$, depending on the sparsity of the covariance matrices). The covariance matrix computation requires ${\cal O}(nd^2)$ operations  (${\cal O}(nd)$ in the sparse case), however, this computation is done once before the optimization algorithm. 
For the problems with large number of sparse features, such as {\em rcv1} and {\em realsim} listed below, 
We can use the empirical covariance estimation in the form $\Sigma=\frac{1}{n-1}X^TX-\frac{n}{n-1}\bar{X}\bar{X}^T$ where $X$ is taken as the data matrix here and $\bar{X}$ is the mean vector of $X$. It is efficient in terms of both memory storage and computational cost since we don't actually have to compute or store $\Sigma$ explicitly.

 {\bf Alternative methods}  Note that function $F^\lambda_{n01}$ $F^\lambda_{nrank}$  have well defined Hessians as well, hence second order methods can be applied to minimize these functions. 
   We have used preconditioned conjugate gradient method and
   other  second order methods  based on  Hessian vector products, however, they did not outperform the L-BFGS in terms of time, while achieving similar accuracy.

 {\bf Starting point} $ F^\lambda_{n01}(w)$ and $ F^\lambda_{nrank}$ are nonconvex functions, thus the results of our optimization approach may depend on the starting point. In our experiments we  used the  following starting point
\begin{equation*}
w_0=\frac{\bar w_0}{\|\bar w_0\|}, \quad \text{where}\quad \bar w_0 = \mu^+ - \frac{{\mu^-}^T \mu^+}{\|\mu^-\|^2} \mu^-.
\end{equation*}
We have also tried random starting points, but the results were not better than  using  $w_0$ defined above. 
For logistic loss and hinge loss, we simply generate random starting points, since these functions are convex.

  {\bf Details}
 All experiments were run using Python3.5 on a Win10 with 3.60 GHz Intel Core i7 processor and 8GB of RAM. All functions were trained using L-BFGS with memory size set to $20$ and the Wolfe line-search parameters were set as $c1=0.0001, c2=0.9$. The maximum number of iterations was set to 500 and the first order optimal threshold was chosen as $10^{-4}$.

{\bf Artificial data} For our first set of experiments, we have generated 9 different artificial Gaussian data sets of various dimensions using random first and second moments;  they are summarized in Table \ref{t1_data_art}. Moreover, for each set we   generated  some percentage of outliers by swapping  labels of positive and negative examples in the training data. 
We set the regularization parameter to zero for the experiments with artificial data sets.

\begin{table}[tph]  
\centering
\caption{Artificial data sets statistics. $d:$ number of features, $n:$ number of data points, $P^+, P^-:$ prior probabilities, $out:$ percentage of outlier data points.}
\label{t1_data_art}
\begin{center}
\begin{small}
\resizebox{0.50\columnwidth}{!}{
 \begin{tabular}{ccccccr} 
Name&$d$&${n}$&$P^+$&$P^-$&$out \%$\\ 
\hline
$data_1$&500&5000&0.05&0.95&0\\
$data_2$&500&5000&0.35&0.65&5\\
$data_3$&500&5000&0.5&0.5&10\\
$data_4$&1000&5000&0.15&0.85&0\\
$data_5$&1000&5000&0.4&0.6&5\\
$data_6$&1000&5000&0.5&0.5&10\\
$data_7$&2500&5000&0.1&0.9&0\\
$data_8$&2500&5000&0.35&0.65&5\\
$data_9$&2500&5000&0.5&0.5&10\\
\end{tabular}
}
\end{small}
\end{center}
\vskip -0.1in
\end{table}

The corresponding numerical results for $F_{n01}(w)$ are summarized in Table \ref{t_art_0-1_2}, where we used 80 percent of the data points as the training data and the rest  as the test data. The reported average accuracy is based on 20 runs for each data set.  When minimizing $F_{n01}(w)$, we  used the exact moments from which the data set was  generated, and also the approximate moments, empirically obtained from the training data. The bold numbers indicate the 
average testing accuracy attained by  minimizing $F_{n01}(w)$ using approximate moments, when this accuracy is significantly better than that obtained by minimizing $F_{log}(w)$. 

We see in Table \ref{t_art_0-1_2} that, as expected, minimizing $F_{n01}(w)$ using the exact moments produces linear classifiers with superior performance overall, while  minimizing $F_{n01}(w)$ using approximate moments outperforms  minimizing $F_{log}(w)$, except for {\em data8} and {\em data9} where the number of data points is small compared to dimension and the moments estimates are not accurate. Note also that minimizing $F_{n01}(w)$ requires less time than minimizing $F_{log}(w)$.  

\begin{table}[tph]   
\centering
 \caption{$F_{n01}(w)$ vs. $F_{log}(w)$ minimization on artificial data sets.}
   \vskip 0.1in
  \label{t_art_0-1_2}
   \vskip 0.05in
\begin{center}
\begin{small}
 \begin{tabular}{ c|cc|cc|cc}  
\multirow{3}{*}{Data} 
&\multicolumn{2}{c|}{\textbf{$\pmb{F_{n01}(w)}$ Minimization}}&\multicolumn{2}{c|}{\textbf{$\pmb{F_{n01}(w)}$ Minimization}}&\multicolumn{2}{c}{\textbf{$\pmb{F_{log}(w)}$ Minimization}}\\  
&\multicolumn{2}{c|}{\textbf{Exact moments}}&\multicolumn{2}{c|}{\textbf{Approximate moments}}&&\\ 
&{Accuracy$\pm$ std}&{Time (s)}&{Accuracy $\pm$ std}&{Time (s)}&{Accuracy $\pm$ std}&{Time (s)}\\
\hline
$data_1$&0.9965$\pm$0.0008&0.25&0.9907$\pm$0.0014&1.04&0.9897$\pm$0.0018&3.86\\ 
$data_2$&0.9905$\pm$0.0023&0.26&\textbf{0.9806}$\pm$0.0032&0.86&0.9557$\pm$0.0049&13.72\\
$data_3$&0.9884$\pm$0.0030&0.03&\textbf{0.9745}$\pm$0.0037&1.28&0.9537$\pm$0.0048&15.79\\ 
$data_4$&0.9935$\pm$0.0017&0.63&0.9791$\pm$0.0034&5.51&0.9782$\pm$0.0031&10.03\\
$data_5$&0.9899$\pm$0.0026&5.68&\textbf{0.9716}$\pm$0.0048&10.86&0.9424$\pm$0.0055&28.29\\
$data_6$&0.9904$\pm$0.0017&0.83&\textbf{0.9670}$\pm$0.0058&5.18&0.9291$\pm$0.0076&25.47\\
$data_7$&0.9945$\pm$0.0019&4.79&0.9786$\pm$0.0028&32.75&0.9697$\pm$0.0031&43.20\\ 
$data_8$&0.9901$\pm$0.0013&9.96&0.9290$\pm$0.0045&119.64&0.9263$\pm$0.0069&104.94\\
$data_9$&0.9899$\pm$0.0028&1.02&0.9249$\pm$0.0096&68.91&0.9264$\pm$0.0067&123.85\\
\end{tabular}
\end{small}
\end{center}
\vskip -0.1in
\end{table}

In Table \ref{t_art_AUC_2} we compare the performance of  linear classifiers obtained by optimizing $F_{nrank}(w)$ defined in \eqref{F.AUC.smooth} and the pairwise hinge loss, $\hat F(w) = F_{hinge}(w)$ as is defined in \eqref{AUC_hinge}
on the artificial data described in Tables \ref{t1_data_art}. 

The  results  are summarized as in Table \ref{t_art_0-1_2}, except that we report the AUC value as the performance measure. As we can see in Table \ref{t_art_AUC_2}, the performance of the linear classifier obtained through minimizing $F_{nrank}(w)$ using approximate moments surpasses that of the classifier obtained via minimizing $\hat F_{hinge}(w)$, both in terms of the average AUC value as well as the required solution time.

\begin{table}[tph]     
\centering 
 \caption{$F_{nrank}(w)$ vs. $F_{hinge}(w)$ minimization  on artificial data sets.}
   \vskip 0.1in
  \label{t_art_AUC_2}
   \vskip 0.05in
\begin{center}
\begin{small}
 \begin{tabular}{ c|cc|cc|cc}  
\multirow{3}{*}{Data} 
&\multicolumn{2}{c|}{\textbf{$\pmb{F_{AUC}(w)}$ Minimization}}&\multicolumn{2}{c|}{\textbf{$\pmb{F_{AUC}(w)}$ Minimization}}&\multicolumn{2}{c}{\textbf{$\pmb{F_{hinge}(w)}$ Minimization}}\\  
&\multicolumn{2}{c|}{\textbf{Exact moments}}&\multicolumn{2}{c|}{\textbf{Approximate moments}}&\\ 
&{AUC$\pm$ std}&{Time (s)}&{AUC $\pm$ std}&{Time (s)}&{AUC $\pm$ std}&{Time (s)}\\  
\hline
$data_1$&0.9972$\pm$0.0014&0.01&\textbf{0.9941}$\pm$0.0027&0.23&0.9790$\pm$0.0089&5.39\\
$data_2$&0.9963$\pm$0.0016&0.01&\textbf{0.9956}$\pm$0.0018&0.22&0.9634$\pm$0.0056&159.23\\
$data_3$&0.9965$\pm$0.0015&0.01&\textbf{0.9959}$\pm$0.0018&0.24&0.9766$\pm$0.0041&317.44\\
$data_4$&0.9957$\pm$0.0018&0.02&\textbf{0.9933}$\pm$0.0022&0.83&0.9782$\pm$0.0054&23.36\\
$data_5$&0.9962$\pm$0.0011&0.02&\textbf{0.9951}$\pm$0.0013&0.80&$0.9589\pm$0.0068&110.26\\
$data_6$&0.9962$\pm$0.0013&0.02&\textbf{0.9949}$\pm$0.0015&0.82&0.9470$\pm$0.0086&275.06\\
$data_7$&0.9965$\pm$0.0021&0.08&\textbf{0.9874}$\pm$0.0034&4.61&0.9587$\pm$0.0092&28.31\\ 
$data_8$&0.9966$\pm$0.0008&0.07&\textbf{0.9929}$\pm$0.0017&4.54&0.9514$\pm$0.0051&104.16\\
$data_9$&0.9962$\pm$0.0014&0.08&\textbf{0.9932}$\pm$0.0020&4.54&0.9463$\pm$0.0085&157.62\\
\end{tabular}
\end{small}
\end{center}
\vskip -0.1in
\end{table}

{\bf Real data} We now compare the performance of $F_{n01}$ vs. logistic regression and  $F_{nrank}$ vs. pairwise hinge loss on 21  data sets downloaded from LIBSVM website\footnote{\url{https://www.csie.ntu.edu.tw/~cjlin/libsvmtools/datasets/binary.html}} and UCI machine learning repository\footnote{\url{http://archive.ics.uci.edu/ml/}}, summarized in Table \ref{t1}. The data sets from UCI machine learning repository with categorical features are transformed into grouped binary features.
We have normalized  the data sets so that each feature does not exceed $1$ in absolute value.

\begin{table}[tph]  
\centering
\caption{Data sets statistics. $d:$ number of features, $n:$ number of data points, $P^+, P^-:$ prior probabilities, \\$AC:$ attribute characteristics.}
\label{t1}
\begin{center}
\begin{small}
\resizebox{0.5\columnwidth}{!}{
 \begin{tabular}{ccccccr} 
Name&AC&$d$&${n}$&$P^+$&$P^-$\\ 
\hline
fourclass&real&2&862&0.35&0.65\\
svm1&real&4&3089&0.35&0.65\\
diabetes&real&8&768&0.35&0.65\\
vowel&int&10&528&0.09&0.91\\
magic04&real&10&19020&0.35&0.65\\
poker&int&11&25010&0.02&0.98\\
letter&int&16&20000&0.04&0.96\\ 
segment&real&19&210&0.14&0.86\\
svm3&real&22&1243&0.23&0.77\\
ijcnn1&real&22&35000&0.1&0.9\\
german&real&24&1000&0.3&0.7\\
landsat satellite&int&36&4435&0.09&0.91\\
sonar&real&60&208&0.5&0.5\\
a9a&binary&123&32561&0.24&0.76\\
w8a&binary&300&49749&0.02&0.98\\
mnist&real&782&100000&0.1&0.9\\
colon-cancer&real&2000&62&0.35&0.65\\
gisette&real&5000&6000&0.49&0.51\\
covtype &binary &54 &581012 & 0.49&0.51 \\
rcv1&real&47236&20242&0.52&0.48\\
real-sim& real & 20958 & 72309 & 0.31 & 0.69\\
\end{tabular}
}
\end{small}
\end{center}
\vskip -0.1in
\end{table}

We used five-fold cross-validation using random data split and repeated each experiment four times, the results reported are averaged over 20 runs. The regularization parameter $\lambda$ for $F^\lambda_{n01}$  has been set to $0.001$ as this has been observed to be a good fixed value, while for   $\hat F^\lambda_{log}$ it has been set to $1/n$, which is often suggested in the literature. Full tuning of $\lambda$ for both models can be performed, however, the effect of different $\lambda$ on minimization of $F^\lambda_{n01}$ is somewhat different from the usual regularization, since the function and the regularizer is not convex and local minima may be observed. On the other hand, tuning $\lambda$ for logistic regression is computationally costly. 

In Table \ref{T_Error_logloss_real} we see the comparison of the average testing accuracy of the resulting linear classifier as well as the average number of iterations performed by the algorithms and the average CPU time. We can see that in almost all cases the testing accuracy achieved by both methods is very similar, with a few cases when one approach dominates the other.  However, the solution time of our method is often much smaller, especially on large instances. 

 \begin{table}[tph]
\caption {Comparison of minimizing $F_{n01}(w)$ vs. $F_{log}(w)$.}
\label{T_Error_logloss_real}
  \centering
  \begin{small}
  \resizebox{\columnwidth}{!}{
  \begin{tabular}{c|cccc|ccc}
    & \multicolumn{4}{c|}{$\pmb{F_{n01}}$} & \multicolumn{3}{c}{$\pmb{F_{log}}$}\\
    Data & accuracy&num. iters&sol time&moment time& accuracy&num. iters&sol time\\
    \hline
    fourclass & 0.7564$\pm$ 0.0323&10.35 &0.01$\pm$ 0.00& 0.00& 0.7602$\pm$ 0.0293 &7.45 &0.01$\pm$ 0.00\\
    svm1 & \textbf{0.9455$\pm$ 0.0092} &23.75 & 0.03$\pm$ 0.00&0.00 &0.9306$\pm$ 0.0131 &16.00& 0.02$\pm$ 0.00 \\
    diabetes & 0.7667$\pm$ 0.0371& 25.45 & 0.03$\pm$ 0.01  &0.00 &0.7680$\pm$ 0.0397 &18.95  &0.01$\pm$ 0.00 \\
    vowel &0.9619$\pm$ 0.0207 &36.60& 0.05$\pm$ 0.00& 0.00 & 0.9652$\pm$ 0.0176 &18.70 & 0.01$\pm$ 0.00\\
    magic &0.7665$\pm$ 0.0091&43.05 &  0.06$\pm$ 0.00 &0.00 &\textbf{0.7897$\pm$ 0.0087} &25.55 &0.04$\pm$ 0.01\\
    poker& 0.9795$\pm$ 0.0017& 16.10 & 0.02$\pm$ 0.00& 0.00& 0.9795$\pm$ 0.0017& 30.75& 0.07$\pm$ 0.01 \\
    letter & 0.9710$\pm$ 0.0030  &85.65&  0.13$\pm$ 0.01& 0.00 &\textbf{0.9824$\pm$ 0.0019}& 67.10 & 0.12$\pm$ 0.02\\
    segment &0.9845$\pm$ 0.0292& 401.25 &0.65$\pm$ 0.22& 0.00& \textbf{0.9978$\pm$ 0.0022} &97.85 &0.11$\pm$ 0.01\\
    svm3  &\textbf{0.8208$\pm$ 0.0245}& 214.25 &0.33$\pm$ 0.04 &0.00 &0.7929$\pm$ 0.0209 &18.90 &0.02$\pm$ 0.00\\
    ijcnn1 &0.9054$\pm$ 0.0026 &41.9 &0.07$\pm$ 0.01&0.00 &0.9142$\pm$ 0.0025& 32.00 &0.10$\pm$ 0.02 \\
    german & 0.7553$\pm$ 0.0252& 35.30&0.05$\pm$ 0.00 &0.00&0.7648$\pm$ 0.0320 &25.60 &0.02$\pm$ 0.00 \\
    satimage  &0.9064$\pm$ 0.0064 &13.00 & 0.02$\pm$ 0.00 & 0.00 &0.9068$\pm$ 0.0060 &490.75& 0.70$\pm$ 0.03\\
    sonar& 0.7573$\pm$ 0.0610& 500.00& 0.66$\pm$ 0.01 &0.00 &0.7549$\pm$ 0.0761& 14.90 &0.01$\pm$ 0.00 \\
    a9a &0.8376$\pm$ 0.0043&130.10 &0.35$\pm$ 0.04 &0.02& 0.8472$\pm$ 0.0041 &75.85 &1.27$\pm$ 0.09\\
    w8a &0.9807$\pm$ 0.0013 & 273.95 &2.00$\pm$ 0.12& 0.07 &0.9842$\pm$ 0.0012&24.90  &1.60$\pm$ 0.16 \\
    mnist &0.9819$\pm$ 0.0008& 500.00 &16.49$\pm$ 0.21 &0.54&0.9877$\pm$ 0.0005&112.55 &36.88$\pm$ 2.21 \\
    colon &\textbf{0.7833$\pm$ 0.1191} &17.50  &0.48$\pm$ 0.06 &0.04& 0.7167$\pm$ 0.1221& 54.15 &0.11$\pm$ 0.01\\
    gisette &0.9753$\pm$ 0.0035&83.30& 14.60$\pm$ 2.38& 1.06 &0.9714 $\pm$ 0.0043 &156.10 & 21.64$\pm$ 1.79\\
    covtype& 0.5502$\pm$ 0.0134 & 500.00& 7.93$\pm$ 0.13 &0.11 &\textbf{0.7562$\pm$0.0010}& 97.50 &15.87$\pm$2.35 \\
    rcv1 & 0.9632$\pm$ 0.0026 &73.35 &26.54$\pm$ 2.32 &1.37 & 0.9595$\pm$ 0.0024& 15.30 &56.37$\pm$ 1.95\\
    realsim & 0.9547$\pm$ 0.0018 & 500.00 &263.84$\pm$10.19 & 2.67 & \textbf{0.9676$\pm$0.0014}  &16.65 &1367.80$\pm$62.33 \\
  \end{tabular} 
  }
  \end{small}
\end{table}
\begin{table}[tph]
\caption {Comparison of minimizing $F_{nrank}(w)$ vs. $F_{hinge}(w)$.}
\label{T_AUC_hinge_real}
  \centering
  \resizebox{\columnwidth}{!}{
  \begin{tabular}{c|cccc|cccc}
    & \multicolumn{4}{c|}{$\pmb{F_{nrank}}$} & \multicolumn{3}{c}{$\pmb{F_{hinge}}$}\\
    Data & accuracy&num. iters&sol time&moment time& accuracy&num. iters&sol time\\
    \hline
fourclass &0.8362$\pm$ 0.0312 &7.00 & 0.01$\pm$ 0.00  &0.00 &0.8361$\pm$ 0.0312 &11.95 &0.15$\pm$ 0.01\\
svm1 & 0.9717$\pm$ 0.0065  &13.20& 0.01$\pm$ 0.00 &0.00 &\textbf{0.9841$\pm$ 0.0041} &11.95 &0.53$\pm$ 0.04 \\
diabetes & 0.8311$\pm$ 0.0312 &14.65& 0.01$\pm$ 0.00 & 0.00 &0.8308$\pm$ 0.0329 & 20.80  &0.29$\pm$ 0.25\\
shuttle &0.9840$\pm$ 0.0016 &63.90 & 0.07$\pm$ 0.01 & 0.00& 0.9892$\pm$ 0.0015 &12.85 &8.66$\pm$ 0.48\\
vowel& 0.9585$\pm$ 0.0333& 19.30 & 0.02$\pm$ 0.00 &0.00 &\textbf{0.9737$\pm$ 0.0202} &36.35 &0.34$\pm$0.17 \\
magic& 0.8382$\pm$ 0.0071& 22.30 & 0.02$\pm$ 0.00& 0.00 &0.8428$\pm$ 0.0070 &20.25 &5.74$\pm$ 0.49 \\
poker& 0.5053$\pm$ 0.0224 &15.75& 0.02$\pm$ 0.00 &0.00& 0.5070$\pm$ 0.0223& 28.80  &11.26$\pm$ 2.55 \\
letter  &0.9830$\pm$ 0.0029&  23.45 &0.02$\pm$ 0.00 &0.00 &0.9884$\pm$ 0.0022 &31.25 &9.00$\pm$ 1.36 \\
segment &0.9947$\pm$ 0.0055 &261.30 &0.27$\pm$ 0.15 &0.00& 0.9999$\pm$ 0.0001 &37.60 &1.28$\pm$0.22 \\
svm3 & \textbf{0.7996$\pm$ 0.0421 }&115.95 & 0.67$\pm$ 0.08 & 0.00& 0.7731$\pm$ 0.0457& 25.55 &0.48$\pm$ 0.08\\
ijcnn1 &0.9269$\pm$ 0.0036 &31.00 & 0.03$\pm$ 0.00&0.00& 0.9291$\pm$ 0.0037 &35.55&19.48$\pm$ 0.99\\
german  &0.7938$\pm$ 0.0292& 26.90& 0.03$\pm$ 0.00& 0.00& 0.7929$\pm$ 0.0292& 35.60 &0.56$\pm$ 0.08 \\
satimage & 0.7561$\pm$ 0.0163 &80.00 & 0.09$\pm$ 0.02 &0.00 &0.7665$\pm$ 0.0193 & 78.00 &5.33$\pm$ 1.36 \\
sonar& 0.8150$\pm$ 0.0672& 500.00 &0.51$\pm$ 0.01 &0.00 & \textbf{0.8470$\pm$ 0.0559}& 113.90 &0.77$\pm$ 1.17 \\
a9a &0.9002$\pm$ 0.0040& 205.90 &0.24$\pm$ 0.04& 0.02 &0.9033$\pm$ 0.0037 &81.15 &48.04$\pm$2.50\\
w8a &0.9631$\pm$ 0.0058 &422.75 &0.58 $\pm$ 0.05 & 0.07 & 0.9659$\pm$ 0.0049& 400.05 &606.08$\pm$ 137.09 \\
mnist &0.9942$\pm$ 0.0009 &500.00 &0.88$\pm$ 0.07  &0.55&0.9953 $\pm$ 0.0007 &70.80 &516.67$\pm$ 21.13 \\
colon &0.8715$\pm$ 0.0933 &13.40 &0.16$\pm$ 0.01&0.04 &0.8774$\pm$ 0.0998 &78.35& 0.58$\pm$ 0.20\\
gisette &0.9962$\pm$ 0.0012& 20.60& 1.61$\pm$ 0.08 & 1.05 &0.9943$\pm$ 0.0013& 73.95 &107.49$\pm$ 23.64 \\
covtype &0.8243$\pm$ 0.0001& 240.20& 0.24$\pm$ 0.02 & 0.11 &0.8272$\pm$ 0.0009& 192.65 &1872.67$\pm$ 104.84 \\
rcv1  &0.9934$\pm$ 0.0008 &15.90 &5.38$\pm$ 0.32 &1.42 &0.9941$\pm$0.0008 &23.50 &3712.87$\pm$ 229.71 \\
realsim  &0.9916$\pm$ 0.006 &46.20 &20.16$\pm$ 1.22 &2.82 & & & \\
  \end{tabular} 
  }
\end{table}

In Table \ref{T_AUC_hinge_real} we compare the average testing AUC of two linear classifiers as well as the average number of iterations performed by the algorithms and the average CPU time. For these experiments we set $\lambda$ to $0.001$ for $F^\lambda_{nrank}$, but for $\hat F^\lambda_{hinge}$ we set it to $1/\sqrt{n^+n^-}$, to mimic the choice of the regularization term in the case of logistic regression. 
We can see that testing AUC is almost the same for both methods while the solution time of our proposed model is significantly smaller than that for $\hat F^\lambda_{hinge}$. In fact, we could not obtain solution when minimizing {\em real\_sim} within 24 hours. This difference is due to the fact that the complexity of each iteration of pairwise hinge loss optimization is superlinear in terms of $n$, while our function $F^\lambda_{nrank}$ has no dependence on $n$ at all. 

{\bf Numerical Comparison vs. LDA and ADAM}
To support our observation further, we present comparison of the linear classifiers obtained by our proposed method to those obtained by Linear Discriminant Analysis  (LDA) which is a well-known method to produce linear classifiers under the Gaussian assumption. We observe that the accuracy obtained by LDA classifiers is comparable with the other two but is significantly worse for data sets like {\em svm1} and {\em gisette}. 
In the attempt to reduce the dependence of  the complexity of optimizing $ \hat F^\lambda_{log}(w)$ and $\hat F^\lambda_{hinge}(w)$ on $n$, we  also applied popular version of stochastic gradient descent,  Adam \cite{Ba_Adam}
to the regularized logistic regression  $ \hat F^\lambda_{log}(w)$. Note that there is no reason to apply  stochastic gradient descent methods to  $F_{n01}(w)$, since the dependence on $n$ is removed from per-iteration complexity.
 
 The parameters for ADAM we chosen as recommended in \cite{Ba_Adam}, i.e. fixed step size $\alpha=0.001$, $\beta_1=0.9$, $\beta_2=0.999$ and $\epsilon=10^{-8}$. However, the results were sensitive to the choice of batch size and number of epochs.  In our experiment, after some hand tuning, we chose the number of epochs to be $500$, the same as the maximum number of iterations for deterministic method and batch size is chosen to be $200$ for most data sets. 
 The results are summarized in Table \ref{app_t_2}. For {\em colon}, batch size is chosen to be $10$ and for {\em vowel, sonar}, batch size is chosen to be $20$, since the numbers of samples for these data sets are small. In {\em poker, letter, segment, ijcnn1, w8a}, ADAM achieved comparable performance as $F_{log}$, but in other cases, it did not achieve the same accuracy as L-BFGS. 
 LDA results have been obtained by employing the {\em scikit-learn } Python package and are clearly inferior to minimizing either $F_{n01}(w)$ or $F_{log}(w)$. Both LDA and ADAM are too slow for large scale data sets such as {\em rcv1} and {\em realsim} because of large dimension $d$, hence we we did not report results on these two sets.

\begin{table}[tph]    {}
\centering
 \caption{Comparison of L-BFGS applied to $F_{n01}(w)$ and $F_{log}(w)$ vs. ADAM applied to   $F_{log}(w)$  and  LDA.}
   \vskip 0.1in
  \label{app_t_2}
   \vskip 0.05in
\begin{center}
\begin{small}
 \begin{tabular}{c|cc|c|c}  
\multirow{2}{*}{Data} 
&{\textbf{$\pmb{F_{n01}(w)}$ Minimization}}&{\textbf{$\pmb{F_{log}(w)}$ Minimization}}&{\textbf{LDA}}&{\textbf{ADAM}}\\
&{Accuracy$\pm$ std}&{Accuracy $\pm$ std}&{Accuracy $\pm$ std}&{Accuracy $\pm$ std}\\
\hline
fourclass & 0.7564$\pm$ 0.0323& 0.7602$\pm$ 0.0293&0.7572$\pm$0.0314&0.7378$\pm$0.0444 \\ 
svm1 & 0.9455$\pm$ 0.0092 &0.9306$\pm$ 0.0131&0.8972$\pm$0.0159&0.8417$\pm$0.0153\\ 
diabetes & 0.7667$\pm$ 0.0371&0.7680$\pm$ 0.0397&0.7703$\pm$0.0366&0.7333$\pm$0.3860\\ 
vowel &0.9619$\pm$ 0.0207 & 0.9652$\pm$ 0.0176& 0.9600$\pm$0.0224 &0.9276$\pm$0.0238\\ 
magic &0.7665$\pm$ 0.0091 &0.7897$\pm$ 0.0087 &0.7841$\pm$0.0093&0.6590$\pm$0.0087\\  
poker& 0.9795$\pm$ 0.0017& 0.9795$\pm$ 0.0017&0.9795$\pm$0.0017&0.9795$\pm$0.0017\\ 
letter & 0.9710$\pm$ 0.0030  &0.9824$\pm$ 0.0019&0.9711$\pm$0.0029&0.9709$\pm$0.0034\\ 
segment &0.9845$\pm$ 0.0292& 0.9978$\pm$ 0.0022&0.9617$\pm$0.0331&0.9968$\pm$0.0032\\ 
svm3  &0.8208$\pm$ 0.0245& 0.7929$\pm$ 0.0209 &0.8238$\pm$0.0259&0.7619$\pm$0.0192\\ 
ijcnn1 &0.9054$\pm$ 0.0026 &0.9142$\pm$ 0.0025&0.9081$\pm$0.0029&0.9024$\pm$0.0023\\  
german & 0.7553$\pm$ 0.0252&0.7648$\pm$ 0.0320&0.7675$\pm$0.0275&0.7355$\pm$ 0.0308\\ 
satimage  &0.9064$\pm$ 0.0064  &0.9068$\pm$ 0.0060&0.9061$\pm$0.0065&0.8761$\pm$0.0660\\ 
sonar& 0.7573$\pm$ 0.0610 &0.7549$\pm$ 0.0761&0.7622$\pm$0.0499&0.6768$\pm$0.0703\\ 
a9a &0.8376$\pm$ 0.0043& 0.8472$\pm$ 0.0041&0.8452$\pm$0.0038&0.8066$\pm$0.0061\\ 
w8a &0.9807$\pm$ 0.0013 & 0.9842$\pm$ 0.0012&0.9839$\pm$0.0012&0.9703$\pm$0.0018\\ 
mnist &0.9819$\pm$ 0.0008&0.9877$\pm$ 0.0005&0.9778$\pm$0.0013&0.9722$\pm$0.0032\\ 
colon &0.7833$\pm$ 0.1191 & 0.7167$\pm$ 0.1221&0.8875$\pm$0.0985&0.7375$\pm$0.1332\\ 
gisette &0.9753$\pm$ 0.0035 &0.9714 $\pm$ 0.0043 &0.5875$\pm$0.0207&0.9338$\pm$ 0.0257\\
covtype& 0.5502$\pm$ 0.0134  &0.7562$\pm$0.0010& 0.7553$\pm$0.0009&0.6304$\pm$0.0142
\end{tabular}
\end{small}
\end{center}
\vskip -0.1in
\end{table}

{\bf Accuracy of the new approximations} We further illustrate the comparison of $F_{n01}$ and $\hat F_{log}$ by plotting these functions next to the function they are meant to approximate, which is  the empirical training error $\hat F_{01}$. In Figure \ref{fig:01plots} we show several examples of such comparisons. We have selected a segment of different $w$'s from the starting point  to the stopping point of the algorithm. We have generated $100$ equally spaced points along this path.  Note that the start and the end are different for  $F_{n01}$ and logistic regression, however, what we are trying to illustrate here is the quality of approximation these functions provide with  respect to the true empirical error in the area of interest to the optimization algorithm. For each example, on the left side we plot $F_{n01}(w)$ in red and the empirical error in blue. On the right side we plot logistic regression in red and empirical error in blue, however, due to different scaling on the functions we had to separate their plots. We see that $F_{n01}$ overall provides a better approximation of the empirical error than logistic regression. In all cases aside from {\em svm3} $F_{n01}$ behaves as a smoothed version of the empirical error. Logistic regression, on the other hand does not seem to approximate empirical error function  at all in many cases,  it only successfully predicts the area where the  minimizers of $\hat F_{01}$ lie.   Moreover, in the case of {\em colon}, which is the data set with only 62 data points, the accuracy achieved from $F_{n01}$ much better than that from logistic regression and we see that $F_{n01}$ is a very close approximation of $\hat F_{01}$. 
In Figure \ref{fig:AUCplots} we illustrate how $F_{nrank}$ and pairwise hinge loss approximate the empirical ranking loss function $\hat F_{rank}$. We observe that again $F_{nrank}$ provides  good approximations to $\hat F_{rank}$ while
$\hat F_{hinge}$ is only consistently good at approximating the minimizers.

\begin{figure}[tph]
\begin{subfigure}{0.47\textwidth}
  \centering
  \includegraphics[height=0.6\linewidth,width=\linewidth]{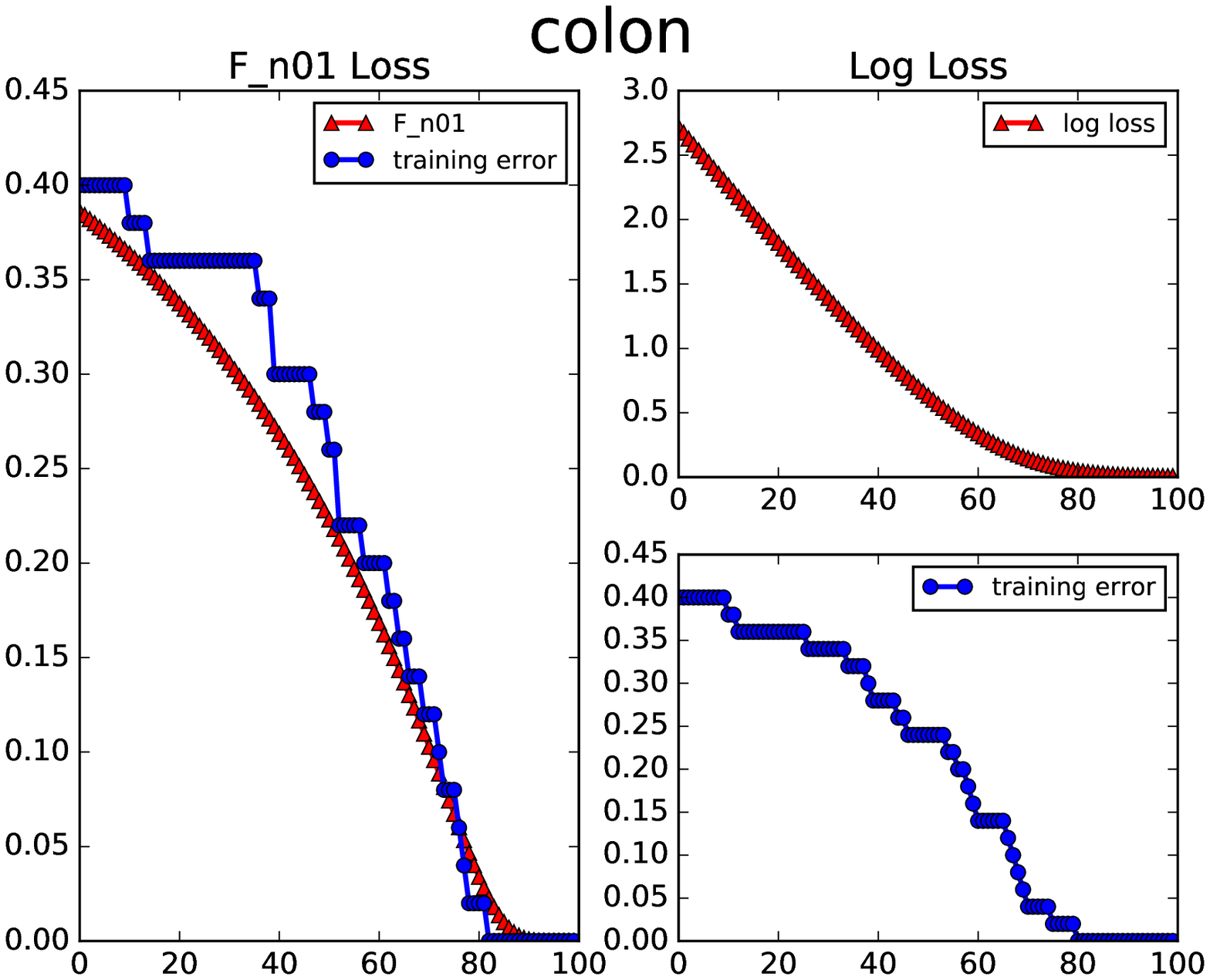}
  \label{fig:sfig1}
\end{subfigure}%
\hfill
\begin{subfigure}{0.47\textwidth}
  \centering
  \includegraphics[height=0.6\linewidth,width=\linewidth]{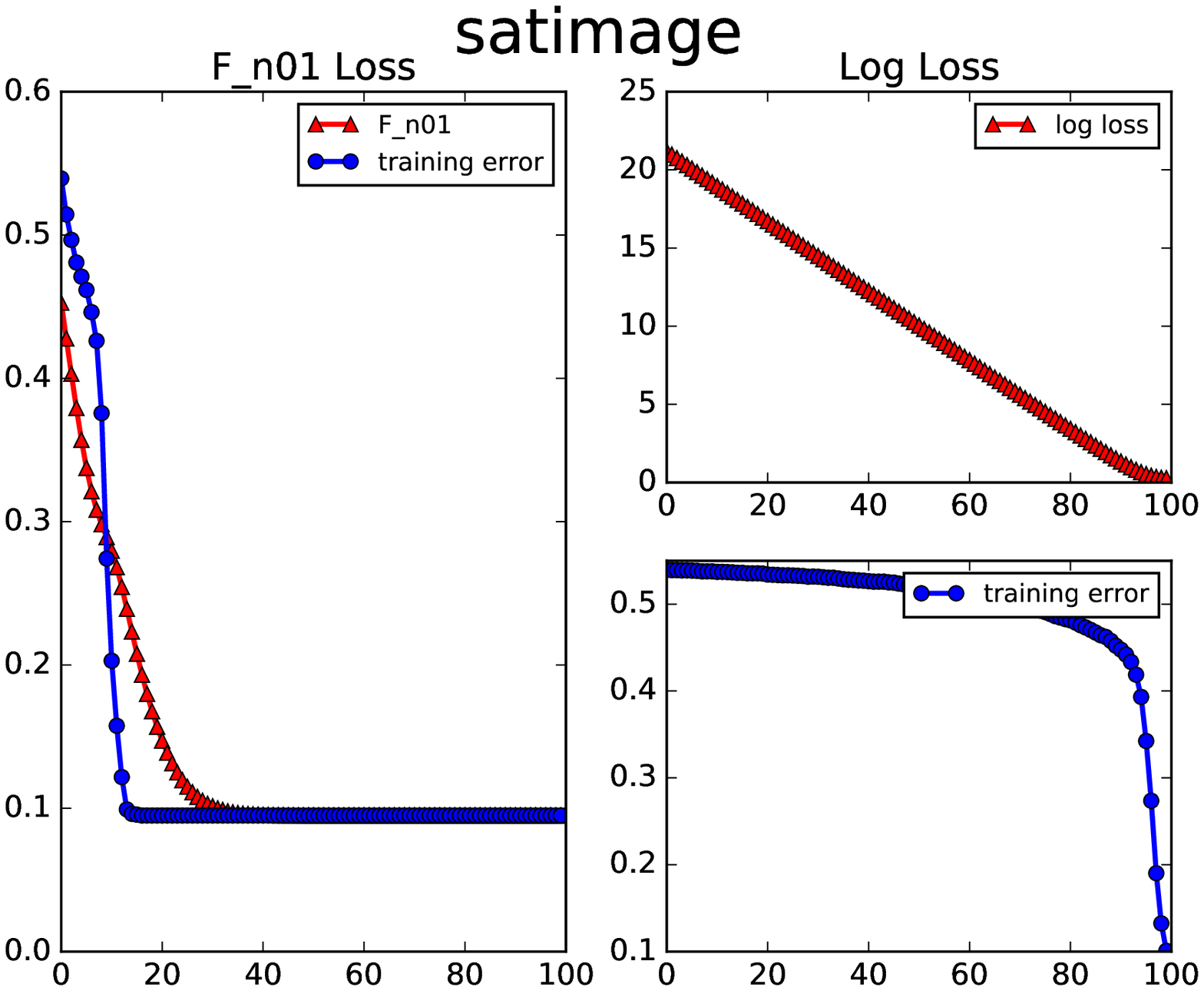}
  \label{fig:sfig2}
\end{subfigure} 
\medskip
\begin{subfigure}{0.47\textwidth}
  \centering
  \includegraphics[height=0.6\linewidth,width=\linewidth]{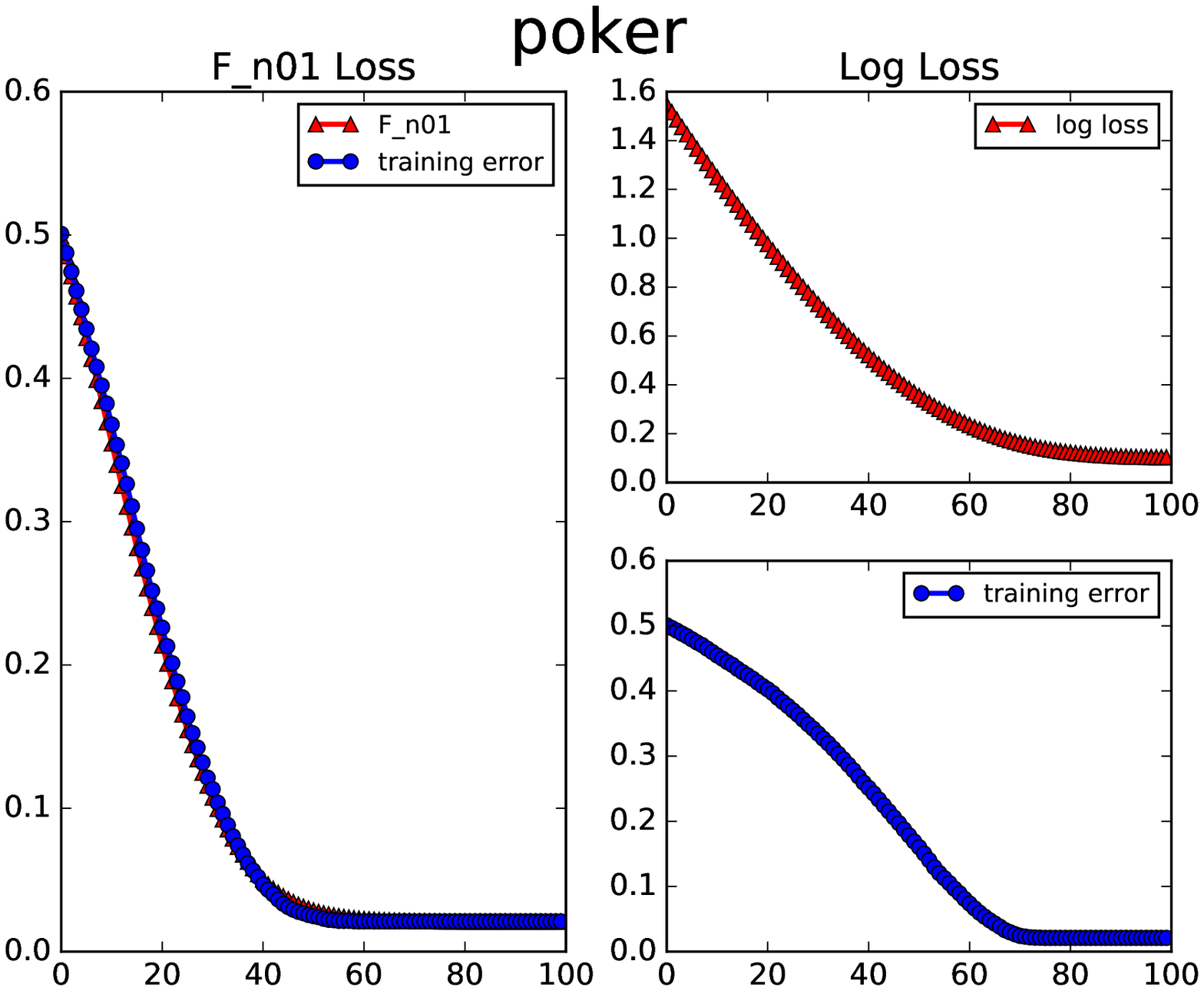}
  \label{fig:sfig2}
\end{subfigure} %
\hfill
\begin{subfigure}{0.47\textwidth}
  \centering
  \includegraphics[height=0.6\linewidth,width=\linewidth]{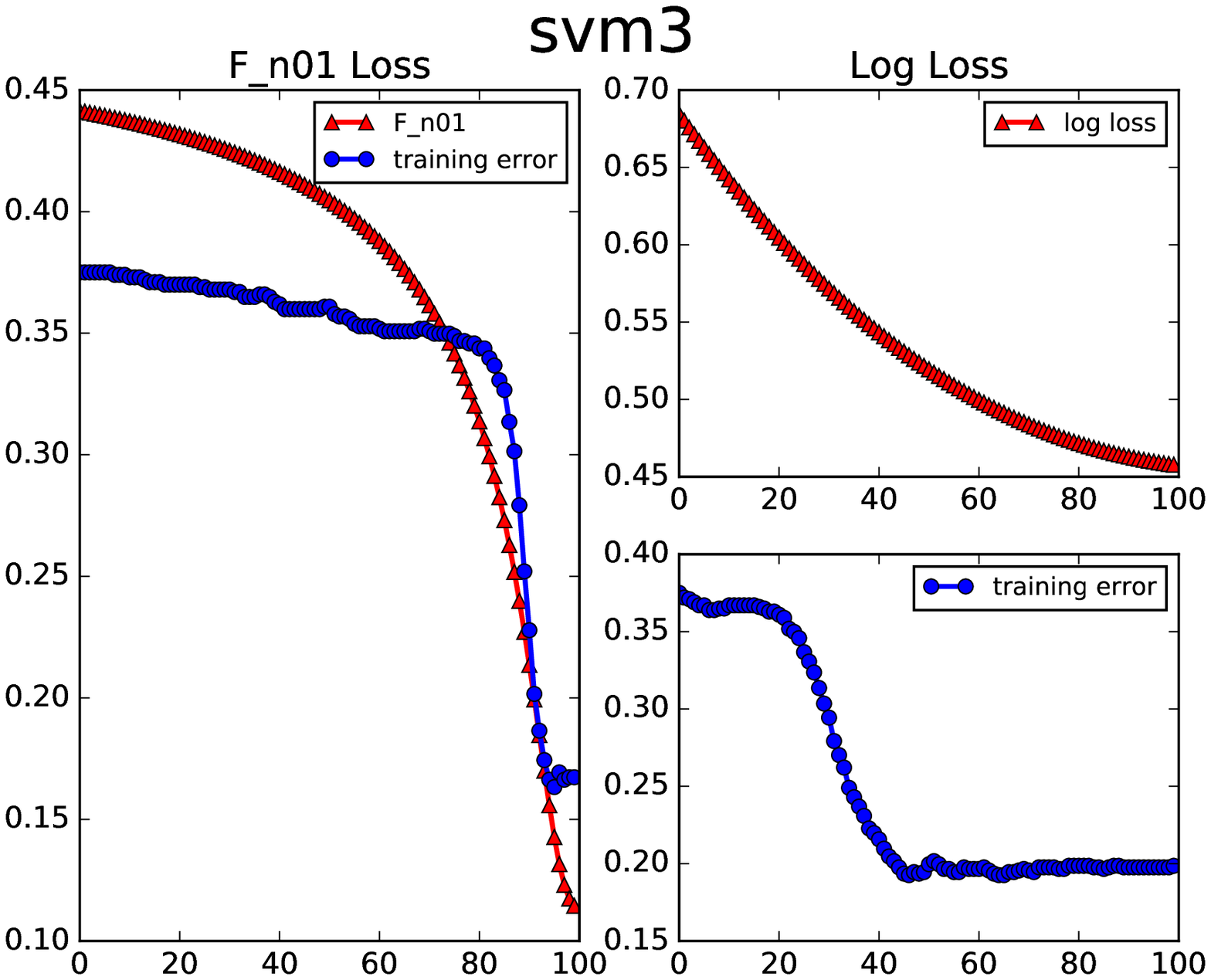}
  \label{fig:sfig2}
\end{subfigure} %
\begin{subfigure}[t]{0.47\textwidth}
  \centering
    \includegraphics[height=0.6\textwidth,width=\linewidth]{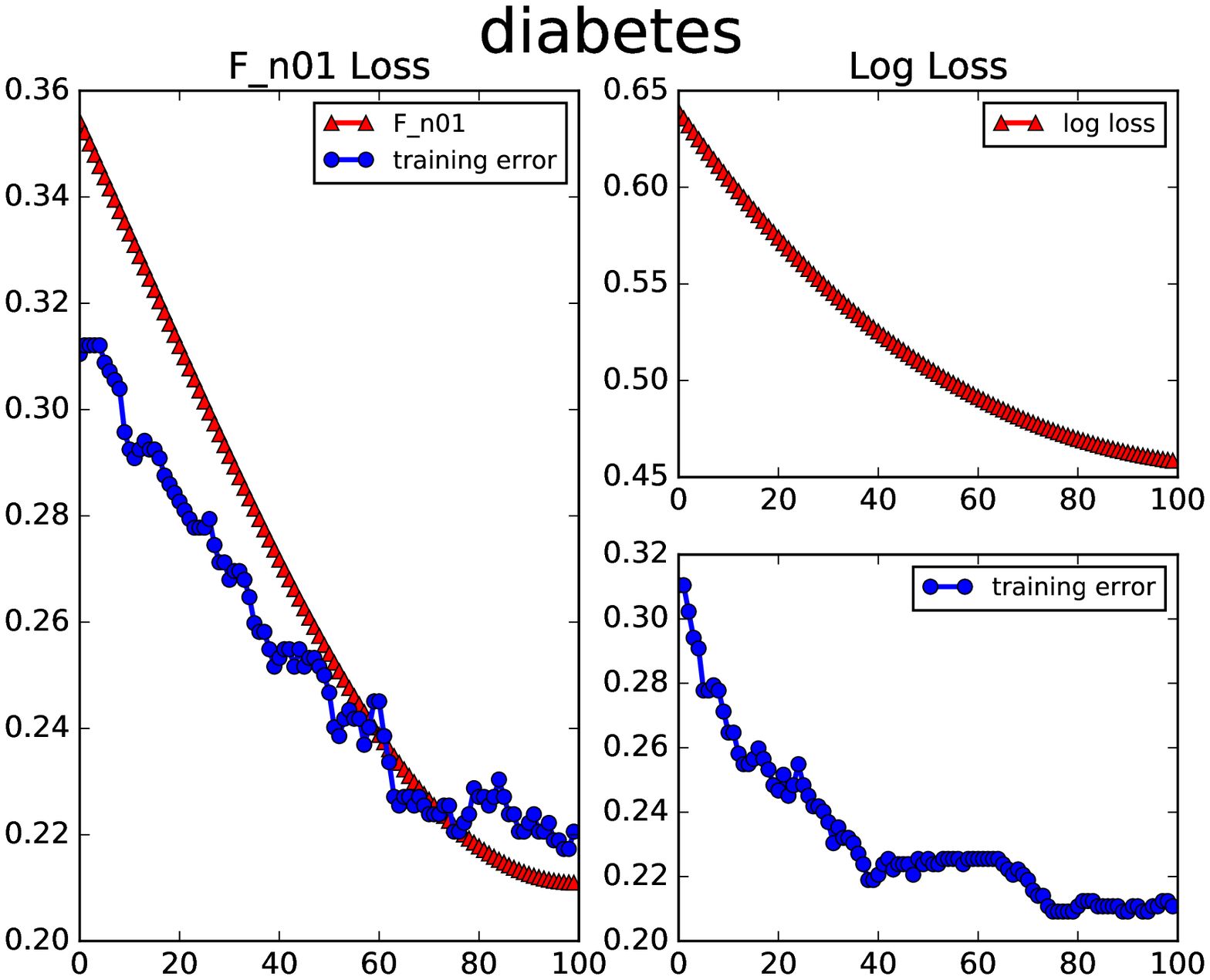}
  \label{fig:sfig1}
\end{subfigure}
\hfill
\begin{subfigure}[t]{0.47\textwidth}
  \centering
  \includegraphics[height=0.6\textwidth,width=\linewidth]{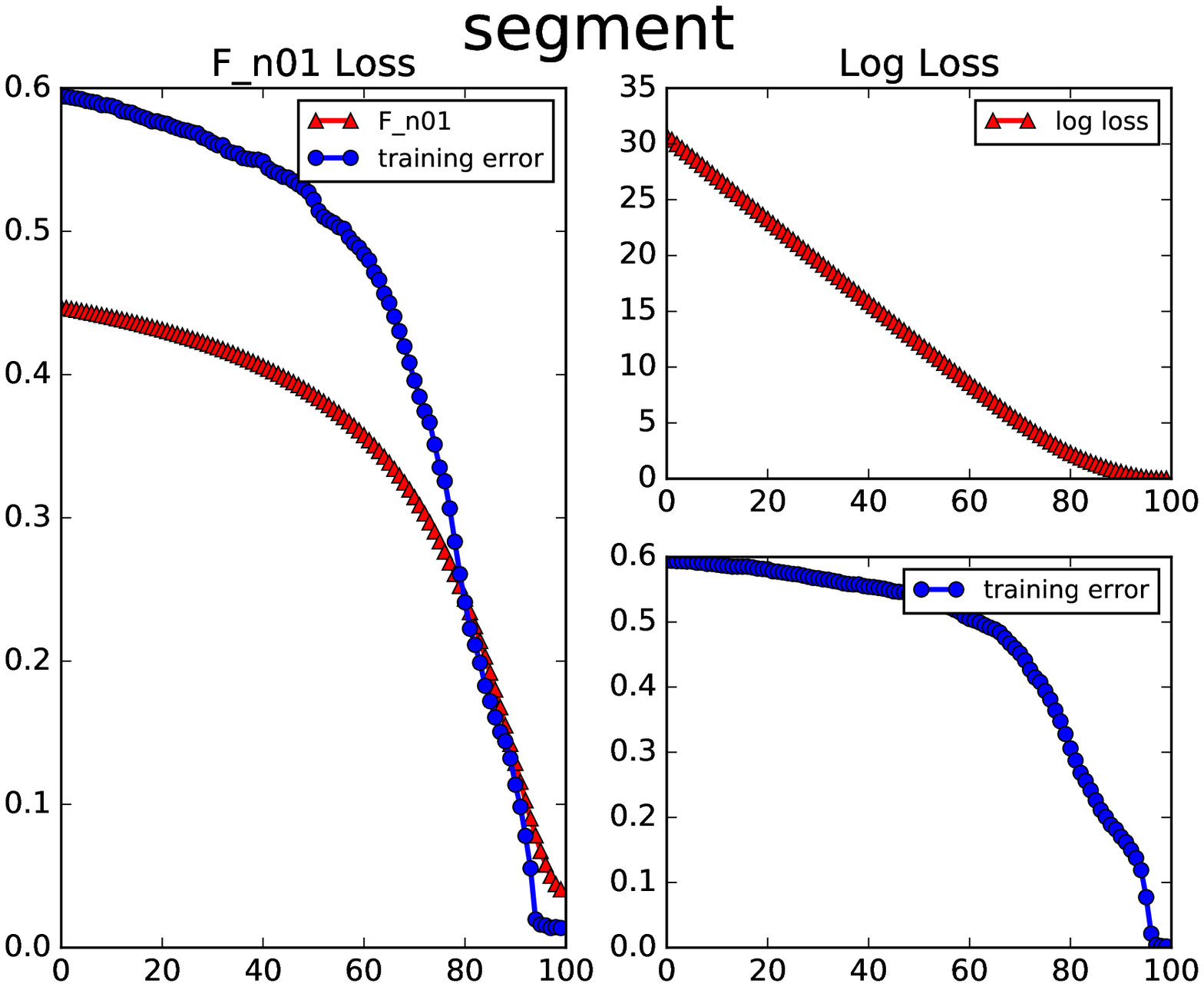}
  \label{fig:sfig2}
\end{subfigure} 
\medskip
\begin{subfigure}[t]{0.47\textwidth}
 \centering
  \includegraphics[height=0.6\textwidth,width=\linewidth]{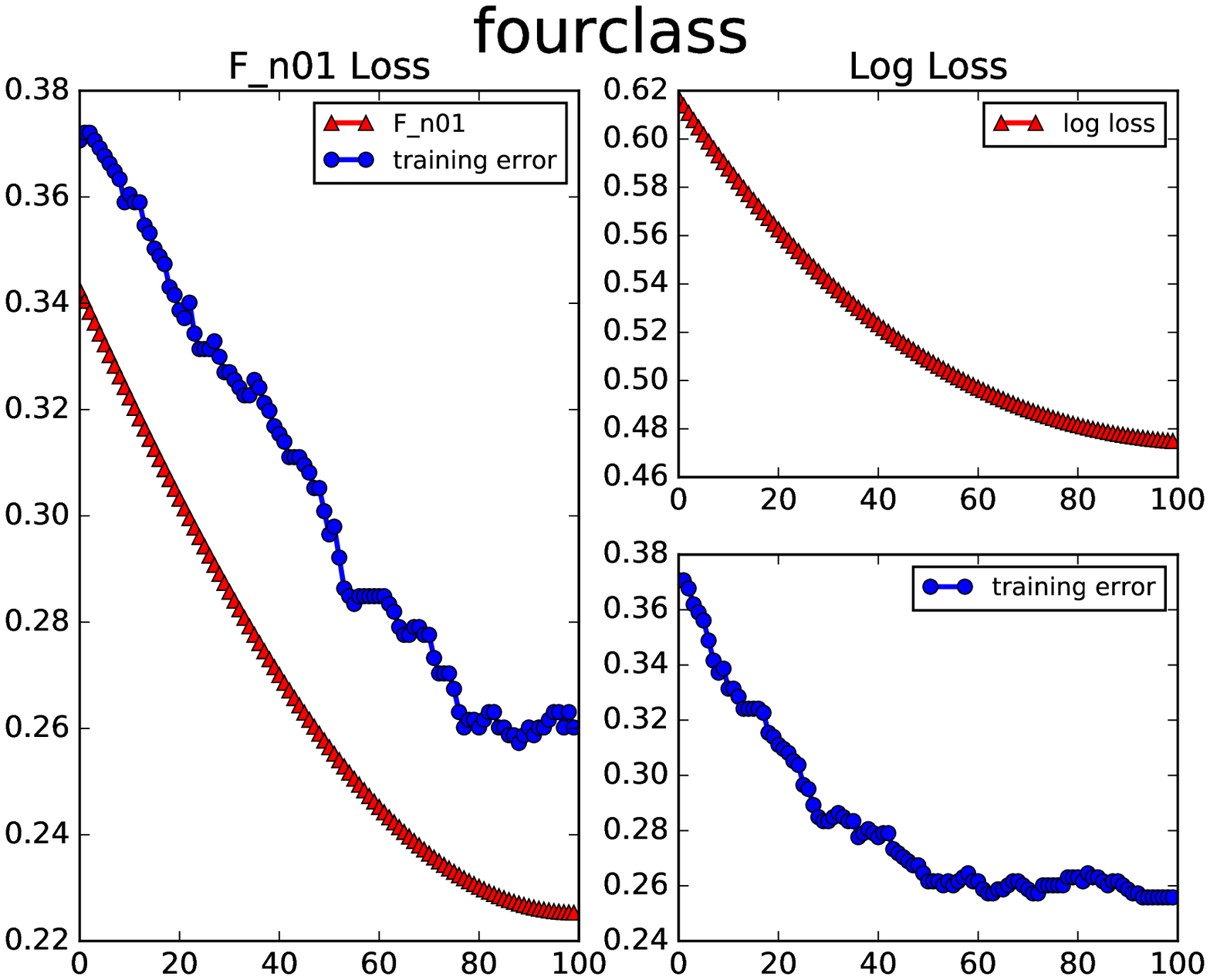}
  \label{fig:sfig2}
\end{subfigure} 
\hfill
\begin{subfigure}[t]{0.47\textwidth}
  \centering
  \includegraphics[height=0.6\textwidth,width=\linewidth]{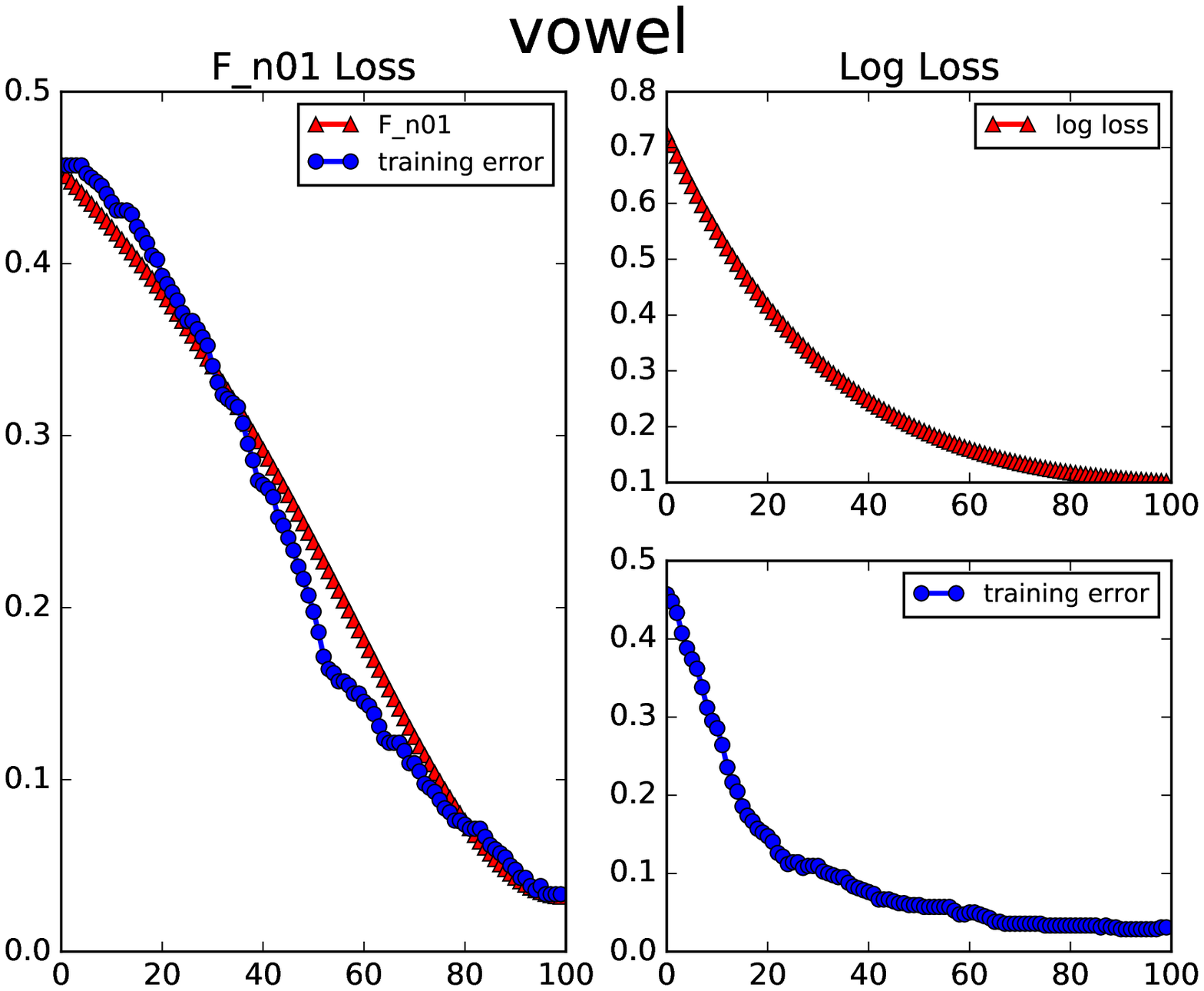}
  \label{fig:sfig2}
\end{subfigure} 
\caption{Approximating empirical loss by $F_{n01}$ and $\hat F_{log}$.}
\label{fig:01plots}
\end{figure}

\begin{figure}[tph]
\begin{subfigure}{0.47\textwidth}
  \centering
  \includegraphics[height=0.6\linewidth,width=\linewidth]{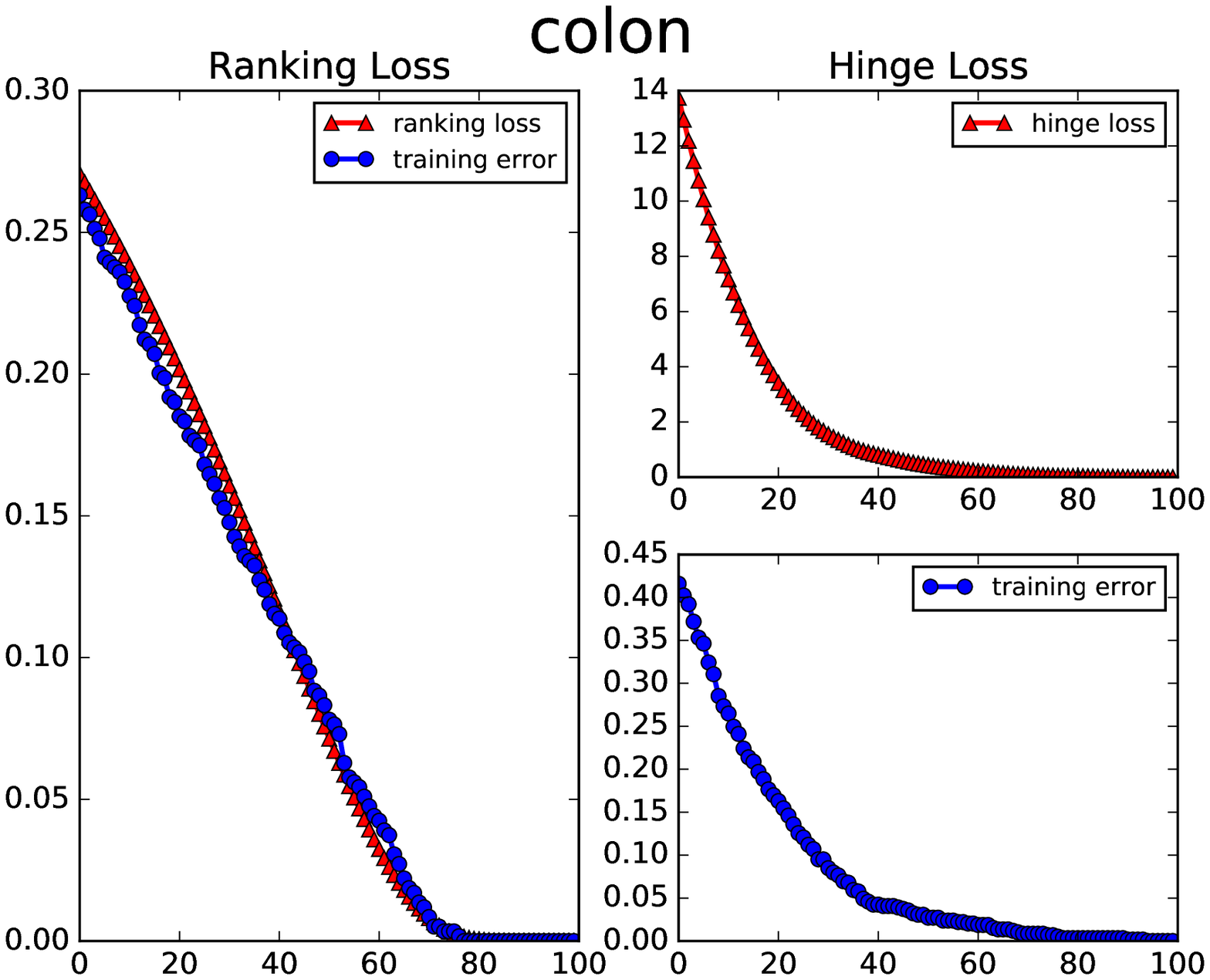}
  \label{fig:sfig1}
\end{subfigure}%
\hfill
\begin{subfigure}{0.47\textwidth}
  \centering
  \includegraphics[height=0.6\linewidth,width=\linewidth]{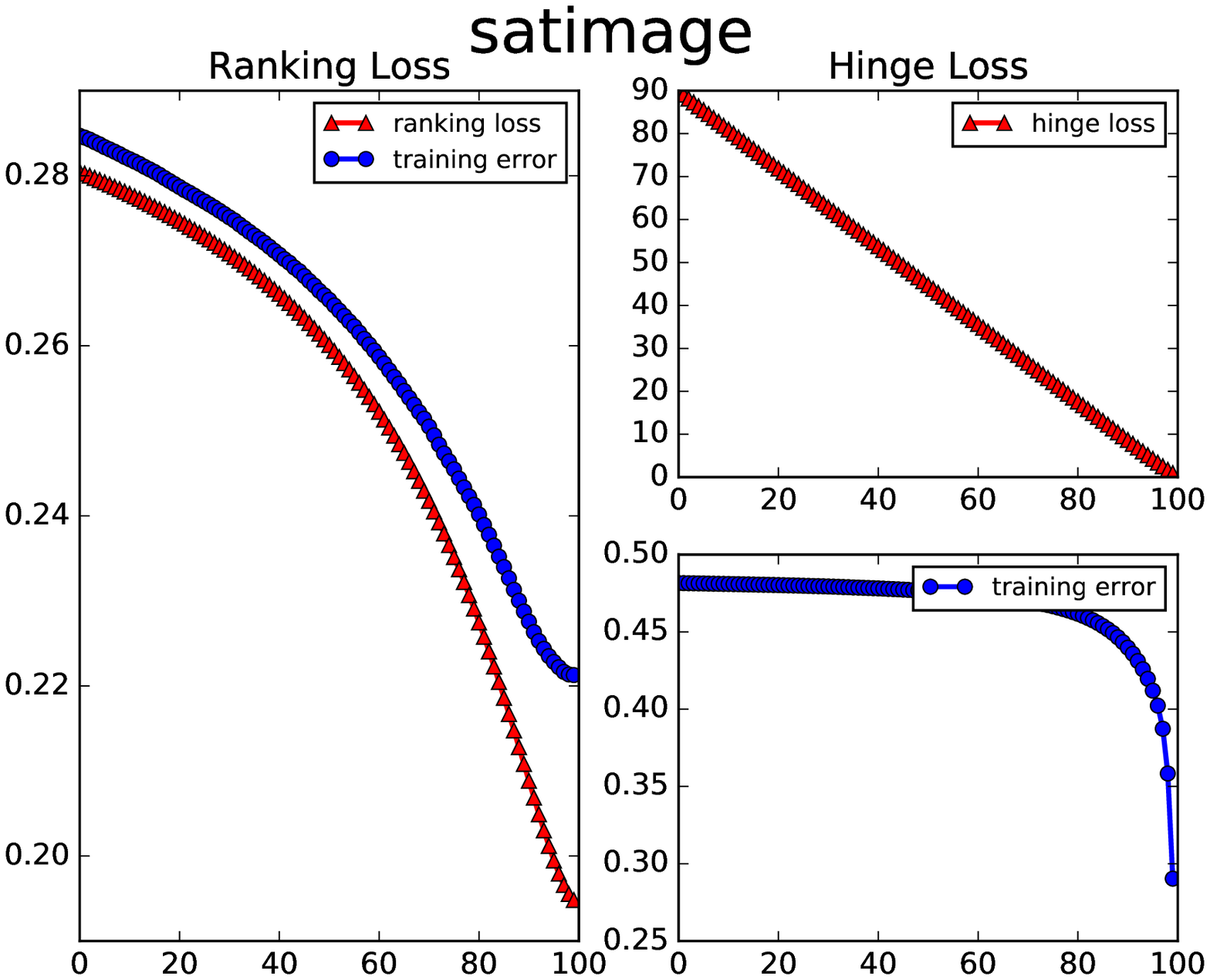}
  \label{fig:sfig2}
\end{subfigure} 
\begin{subfigure}{0.47\textwidth}
  \centering
  \includegraphics[height=0.6\linewidth,width=\linewidth]{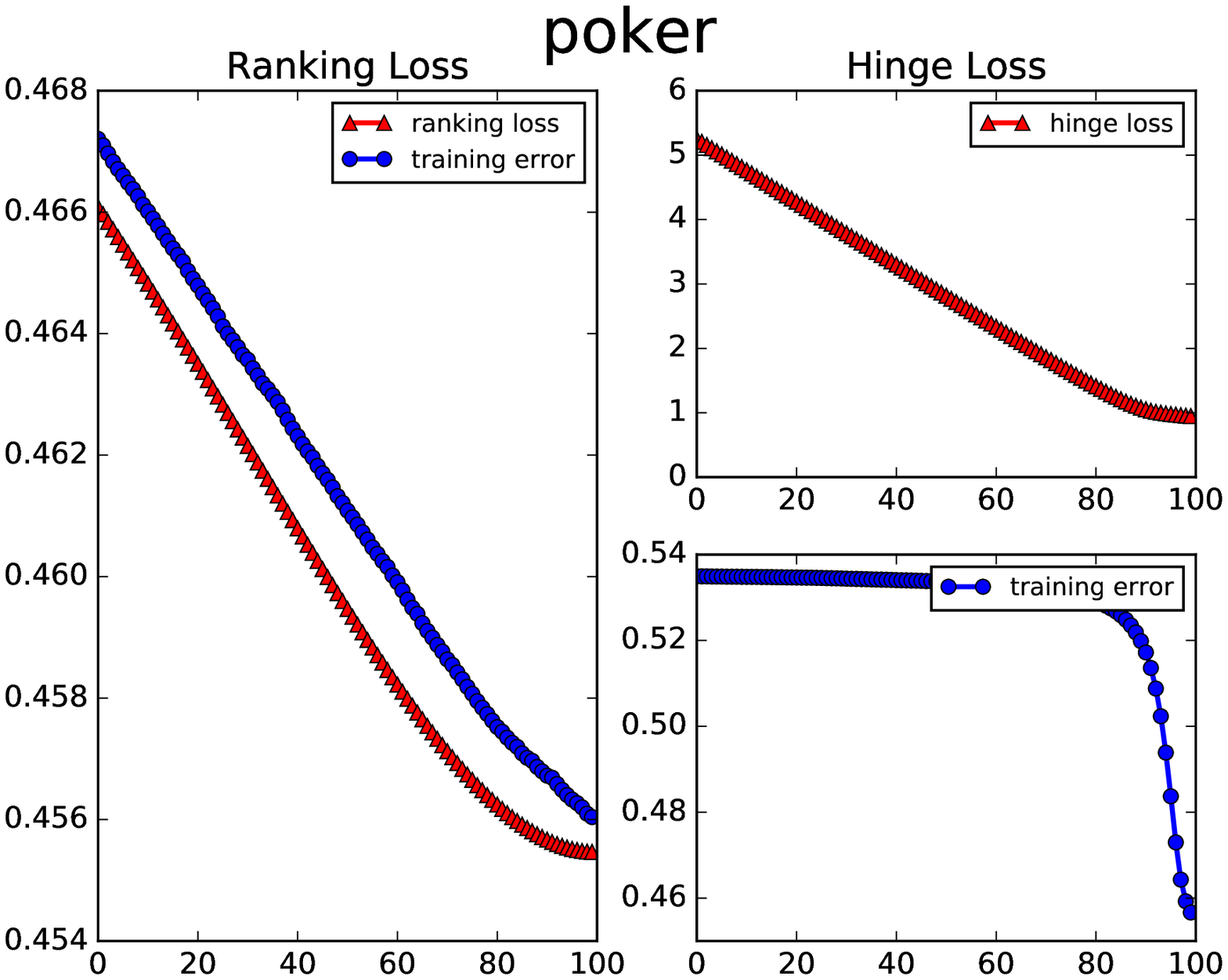}
  \label{fig:sfig2}
\end{subfigure} %
\begin{subfigure}{0.47\textwidth}
  \centering
  \includegraphics[height=0.6\linewidth,width=\linewidth]{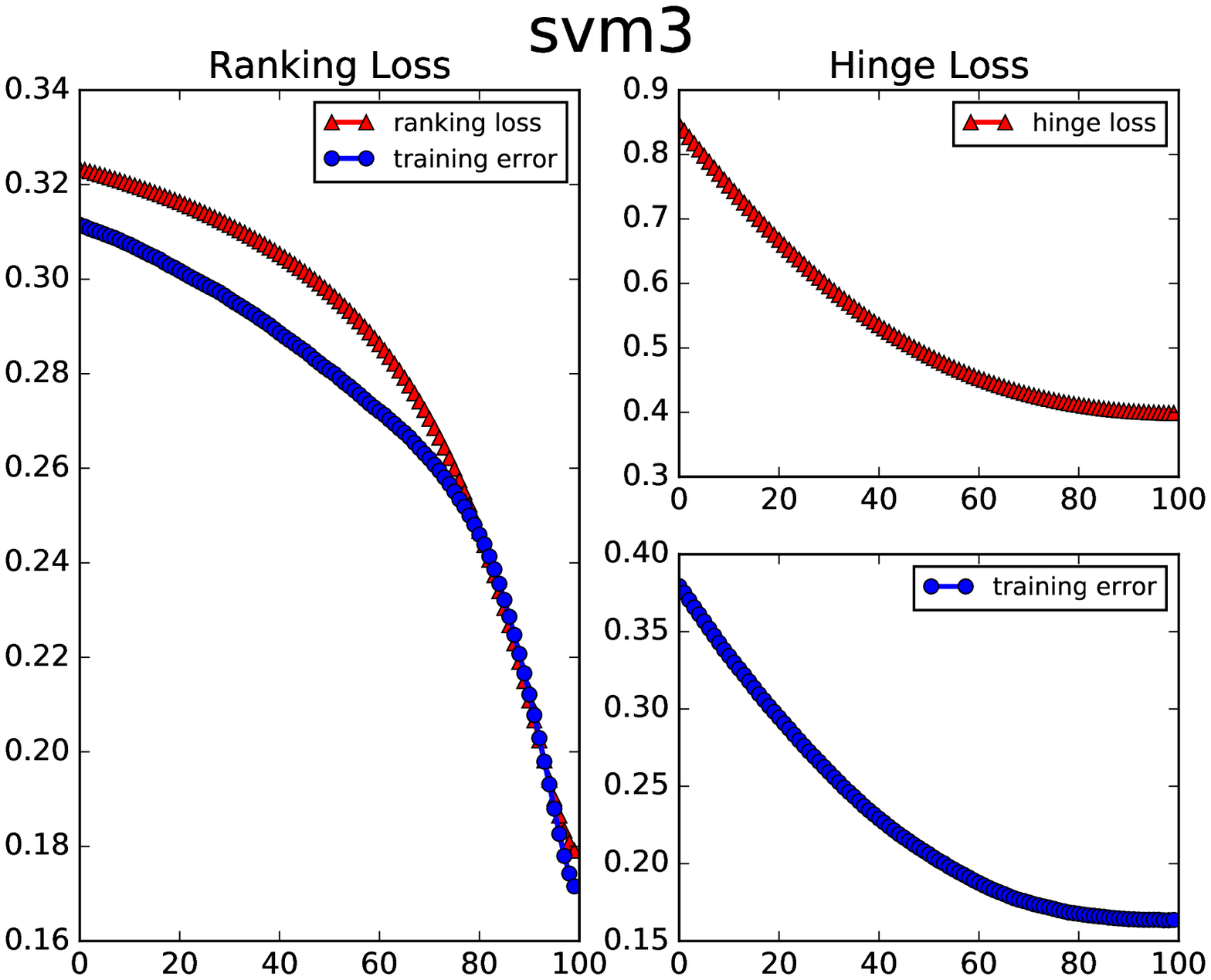}
  \label{fig:sfig2}
\end{subfigure} %
\begin{subfigure}{0.47\textwidth}
  \centering
  \includegraphics[height=0.6\textwidth,width=\linewidth]{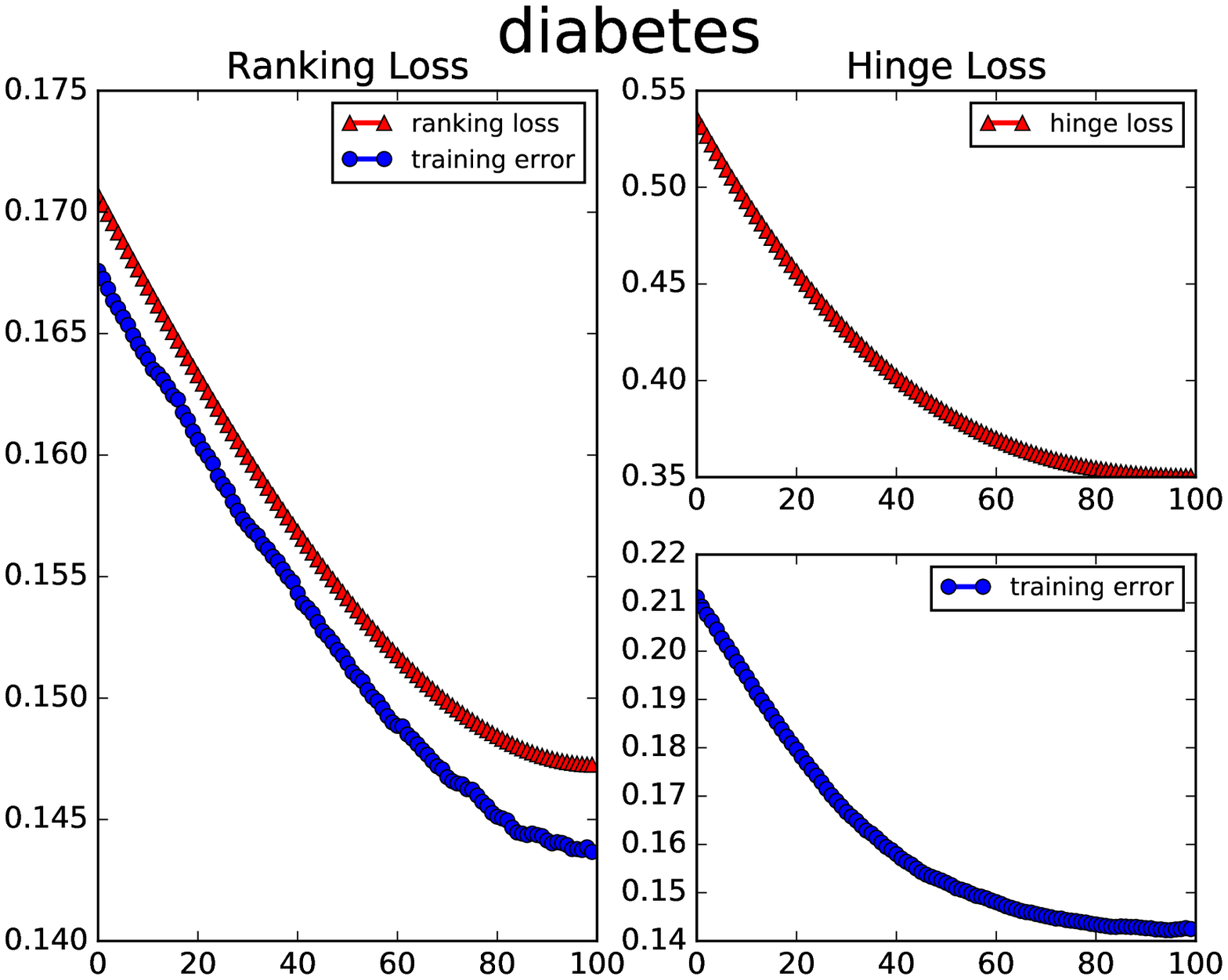}
  \label{fig:sfig1}
\end{subfigure}
\hfill
\begin{subfigure}{0.47\textwidth}
  \centering
  \includegraphics[height=0.6\textwidth,width=\linewidth]{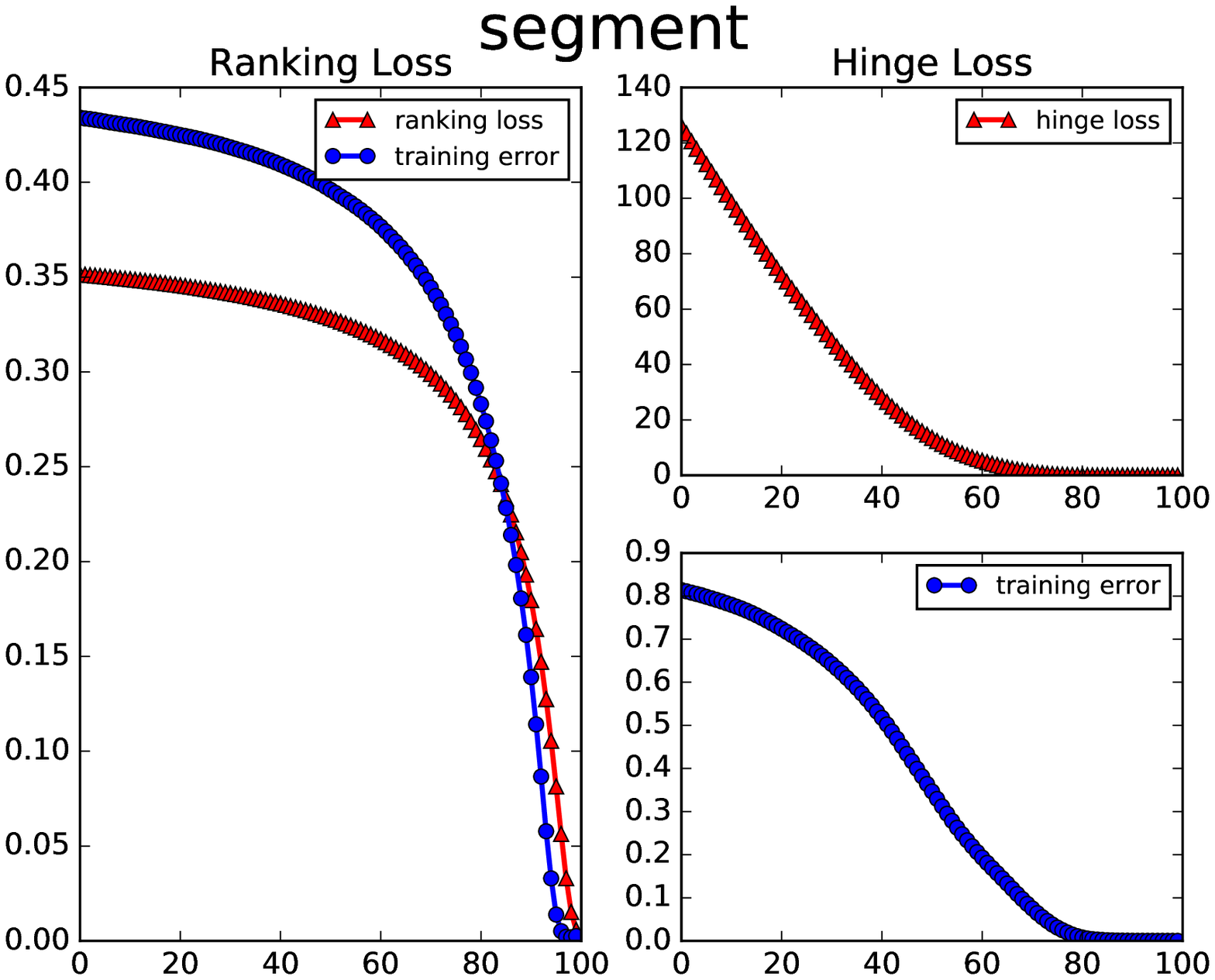}
  \label{fig:sfig2}
\end{subfigure} 
\medskip
\begin{subfigure}{0.47\textwidth}
 \centering
  \includegraphics[height=0.6\textwidth,width=\linewidth]{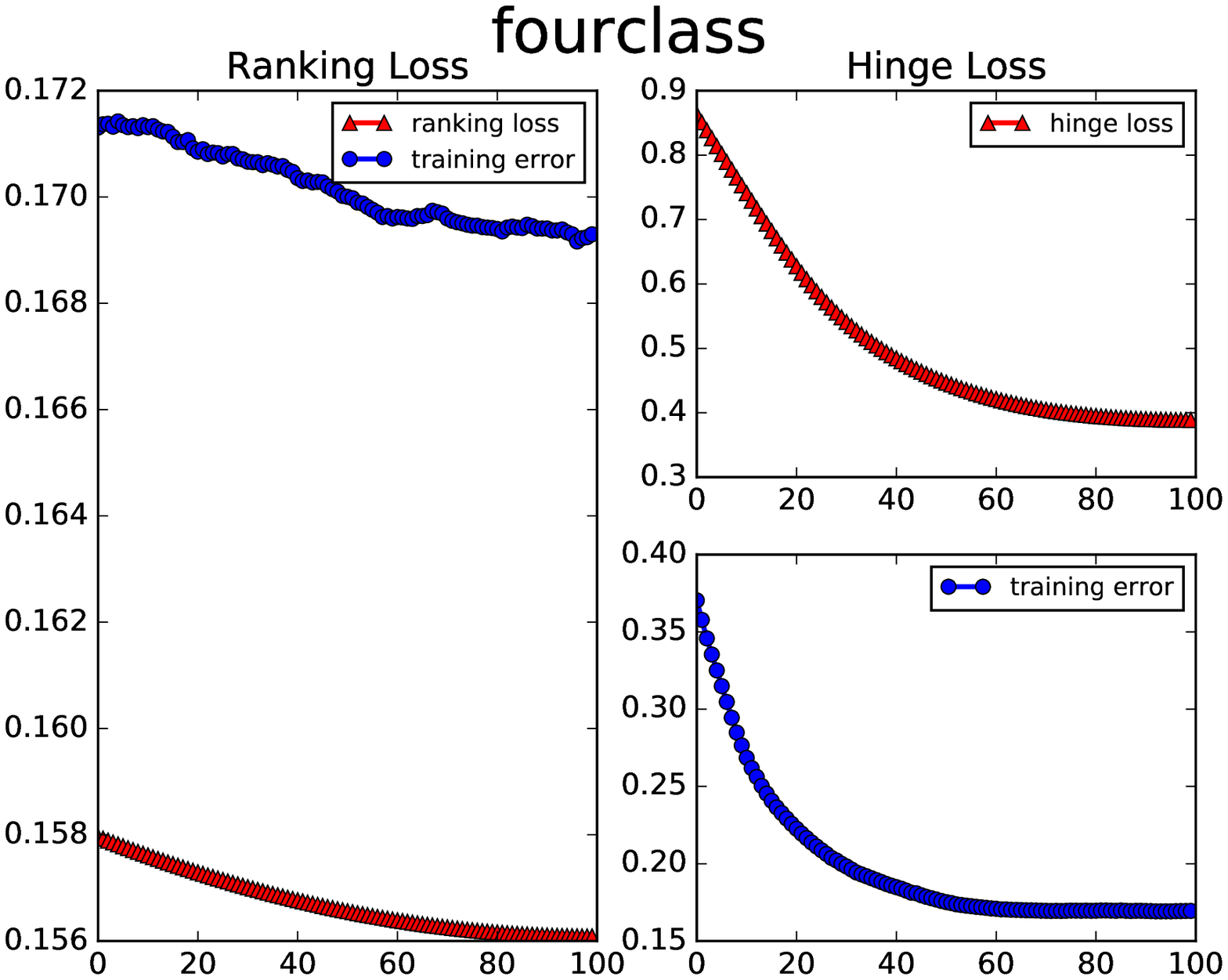}
  \label{fig:sfig1}
\end{subfigure}
\hfill
\begin{subfigure}{0.47\textwidth}
  \centering
  \includegraphics[height=0.6\textwidth,width=\linewidth]{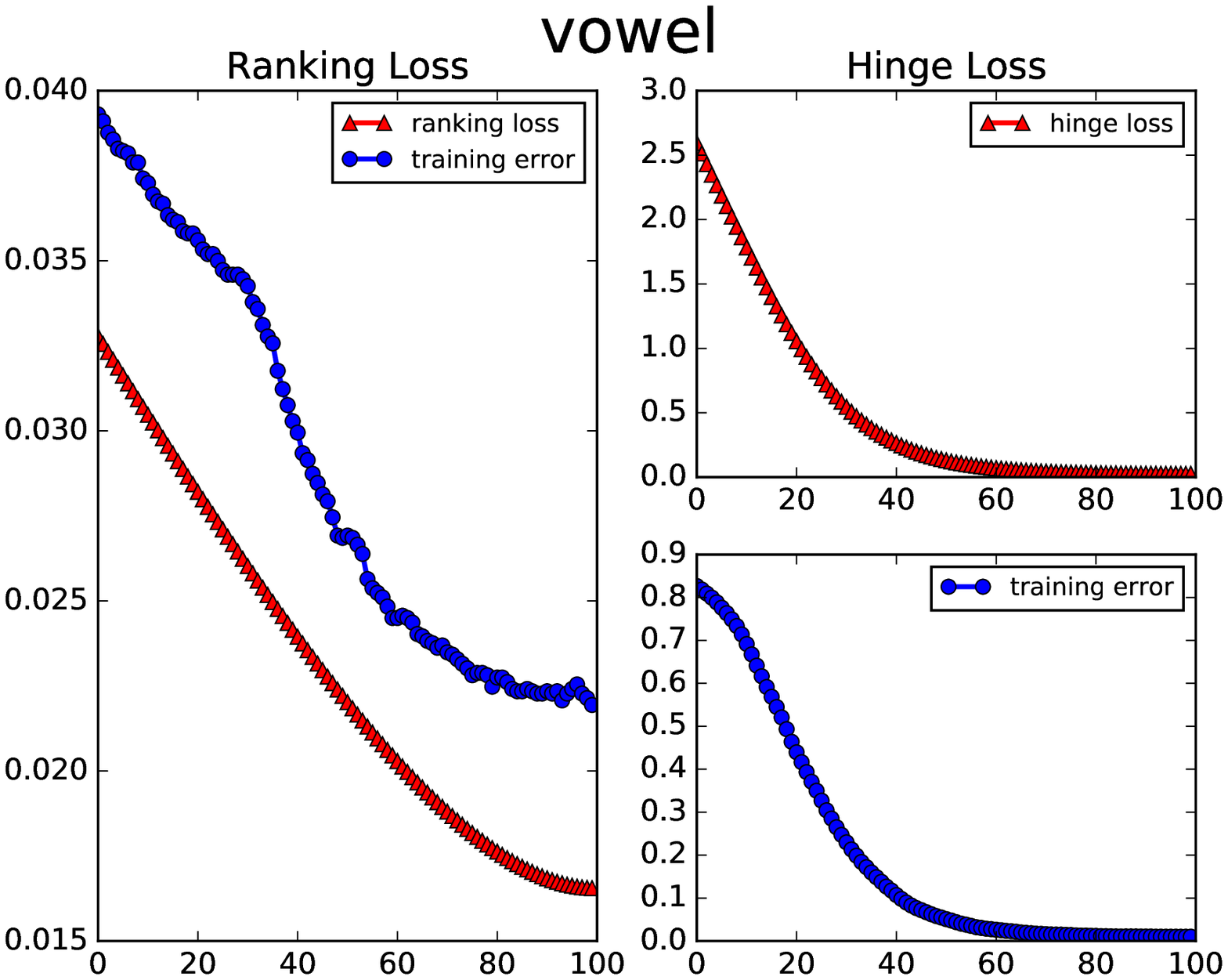}
  \label{fig:sfig2}
\end{subfigure} 
\caption{Approximating empirical ranking loss by $F_{nrank}$ and $\hat F_{hinge}$.}
\label{fig:AUCplots}
\end{figure}

\section{Conclusion}\label{sec.conclusion}
In this paper, we propose novel smooth approximation functions for the training error and ranking loss of linear predictors in binary classification, whose derivatives  are expressed using the first and second moments of the related data distribution.   We give theoretical motivation   for why and when these functions may provide good approximation. We then propose to applying an optimization algorithm to these functions to obtain linear classifiers with the test accuracy and AUC  comparable with those achieved by  state-of-the-art methods. 
The main advantage of the proposed approximations is that their evaluation  and that of  their derivatives is independent of the size of the data sets, and hence optimization algorithms applied to them can be very efficient.

\newpage
\nocite{langley00}
\bibliography{references}
\bibliographystyle{apalike}

\end{document}